\newcommand\NoBlackBoxes{\global\overfullrule0pt}
\newcommand{\N}{\mathbb{N}}
\let\serieslogo@\relax
\let\@setcopyright\relax
\newtheorem{definition}{Definition}[section]
\newtheorem{theorem}[definition]{Theorem}
\newtheorem{proposition}[definition]{Proposition}
\newtheorem{remark}[definition]{Remarks}
\newtheorem{situation}[definition]{Situation}
\newtheorem{example}[definition]{Example}
\newtheorem{proof}[definition]{Proof}
\newcommand{\R}{{\mathbb{R}}}
\renewcommand{\epsilon}{\varepsilon}
\renewcommand{\phi}{\varphi}
\newcommand{\vep}{\varepsilon}
\begin{document}

\setcounter{page}{1}

\title{Some Remarks on Replicated Simulated Annealing}



\author{
  \begin{tabular}{ccc}
    Vincent Gripon & Matthias L\"owe & Franck Vermet \\
    \footnotesize{UMR CNRS Lab-STICC} & \footnotesize{Fachbereich Mathematik und Informatik} & \footnotesize{UMR CNRS 6205}\\
    \footnotesize{IMT Atlantique} & \footnotesize{Universit\"at M\"unster} & \footnotesize{Universit\'e de Bretagne Occidentale}\\
  \end{tabular}
}

%

%

%
%


\date{}

\maketitle

%
%

\begin{abstract}
  Recently authors have introduced the idea of training discrete weights neural networks using a mix between classical simulated annealing and a replica ansatz known from the statistical physics literature. Among other points, they claim their method is able to find robust configurations. In this paper, we analyze this so called ``replicated simulated annealing'' algorithm. In particular, we give criteria to guarantee its convergence, and study when it successfully samples from configurations. We also perform experiments using synthetic and real data bases.
\end{abstract}



\section{Introduction}
In the past few years, there has been a growing interest in finding methods to train discrete weights neural networks. As a matter of fact, when it comes to implementations, discrete weights allow to reach a better efficiency, as they considerably simplify the multiply-accumulate operations, with the extreme case where weights become binary and there is no need to perform any multiplication anymore. Unfortunately, training discrete weights neural networks is complex in practice, since it basically boils down to a NP-hard optimization problem. To circumvent this difficulty, many works have introduced techniques that aim at finding reasonable approximations~\cite{courbariaux2015binaryconnect,choi2018pact,jain2019trained,esser2019learned}.

Among these works, in a recent paper Baldassi et al. \cite{Baldassi_etal} discuss the learning process in artificial neural networks with discrete weights and try to explain why these networks work so efficiently.
Their approach is based on an analysis of the learning procedure in
artificial neural networks. In this process a
huge number of connection weights are adjusted using some stochastic optimization algorithm for a given target function. Some of the resulting optima of the target or energy function have better computational performance and
generalization properties, other worse.
The authors in \cite{Baldassi_etal} propose that the better and more robust configurations of weights lie in dense regions with many maxima or minima (depending on the sign) of the target function, while the optimal configurations that are isolated, i.e.\ far away from the next optimum of the energy function have poor computational performance.
They propose a new measure, called the robust ensemble, that suppresses such configurations with bad computational performance. 
On the other hand, the robust ensemble amplifies the dense regions with many good configurations. In \cite{Baldassi_etal} the authors present various algorithms to sample from this robust ensemble, one of them is {\it Replicated Simulated Annealing} or {\it Simulated Annealing with Scoping}. This algorithm combines the replica approach from statistical physics with the simulated annealing algorithm that is supposed to find the minima (or maxima) of a target function.
The replica technique is used to regularize the highly non-convex target or energy function (another regularization idea was introduced recently in \cite{Biroli_2020}), while the simulated annealing algorithm is used afterwards to minimize this new energy.
We will define Replicated Simulated Annealing in Section 2.

To give a first impression of this algorithm, assume we have $\Sigma:=\{-1,+1\}^N$ as our state space and $N$ is large. On $\Sigma$ we have very rugged energy function $E:\Sigma \to \R$ and assume that $\min_{\sigma \in \Sigma}E(\sigma)=0$. To find the minima of $E$ one could run a Metropolis algorithm for the Gibbs measure at inverse temperature $\beta >0$:
$$\pi_\beta(\sigma):= \frac{\exp(-\beta E(\sigma))}{Z_\beta}, \qquad \sigma \in \Sigma.
$$
Here $Z_\beta:= \sum_{\sigma'} \exp(-\beta E(\sigma'))$ is the partition function of the model, a normalizing factor that makes $\pi_\beta$ a probability measure. If one carefully lowers the temperature, i.e.\ if one increases $\beta$ slowly enough during the process the corresponding Markov chain will get stuck in one of the maxima of $\pi_\infty$ which are easily seen to be the minima of $E$. This is the classical Simulated Annealing algorithm, cf.\ \cite{KirkGelaVecc83} or \cite{Geman_Geman} for the seminal papers. The question how to choose the optimal dependence of $\beta$ from time $t$ and its convergence properties have  been extensively discussed. We just mention \cite{hajek}, \cite{HolleyStroock}, \cite{GoetzeSA}, \cite{DeuschelMazza}, \cite{catoni}, for a short and by far not complete list of references. The upshot is that good a ''cooling schedule'' is of the form $\beta_t=\frac 1m \log (1+ C t)$, where $m$ can be roughly described as the largest hill to be climbed to get from an arbitrary state to one of the global minima of the target function.

However, sometimes not all the global minima are equally important, in particular one may be interested in
regions with many global minima (or almost global minima), so called dense regions. An obstacle may be, that
$E$ exhibits many global minima, but only relatively few of them are in dense regions. Let us motivate this question by
a central example which we will often have in mind in this context and which also is one of the central
objects in \cite{Baldassi_etal}.

\begin{example}\label{perceptron}
Assume we have patterns $\xi^1, \ldots, \xi^M$, $\xi^\mu \in \{\pm 1\}^N, \mu=1, \ldots, M$ where $M=\alpha N$ for some $\alpha>0$. Each of these patterns belongs to one of two groups, which we indicate by
$\vartheta(\xi^\mu)= \vartheta^\mu \in \{\pm 1\}.$

The task is to classify these patterns.
One of the standard methods to do this in machine learning is the perceptron.
Perceptron was defined in the 1960s by Rosenblatt \cite{Ros}.
It is one of the first mathematical models of neural networks, inspired by the biological phenomenon of vision.
In its simplest form, the one we are studying here, it is a single neuron, corresponding to the mathematical model of Mac Culloch and Pitts \cite{MCP}. It is then a binary classifier, which separates two sets of points linearly separable by a hyperplane.
In mathematical terms it maps its input $x$ to $f(x)$ which is either $1$ or $0$, and thus puts it into one of two classes.
This decision is made with the help of a vector of weights
$$W=(W_1, \ldots, W_N) \in \{-1,+1\}^N.$$
More precisely,
$$
f(x)=\begin{cases}1&{\text{if }}\  \langle {W}, x \rangle >0,\\0&{\text{otherwise}}\end{cases}
$$
where $\langle W,x\rangle $ is the dot product $\langle W,x\rangle=\sum_i W_i x_i$.
These weights
$W=(W_1, \ldots, W_N) \in \{-1,+1\}^N$ have to learned and we want the
classification to be perfect, i.e. we want that
$$
\Theta (\vartheta^\mu \langle W, \xi^\mu \rangle):= \Theta (\vartheta^\mu \sum_{i=1}^N W_i \xi_i^\mu)=1
$$
for all $\mu=1, \ldots M$. Here
$$\Theta(x)=\left\{\begin{array}{ll} 1 & \mbox{if } x>0\\ 0 & \mbox{otherwise}\end{array} \right.
$$
denotes the Heaviside-function. Hence our classification task is fulfilled if
$$
\sum_{\mu=1}^M  \Theta (-\vartheta^\mu \langle W, \xi^\mu \rangle)=0
$$
(where we assume that $N$ is even to avoid the specification of tie-breaking rules) or, equivalently
\begin{equation}\label{classify}
\prod_{\mu=1}^M  \Theta (\vartheta^\mu \langle W, \xi^\mu \rangle)=1.
\end{equation}
Note that in Rosenblatt's initial model, the weights $(W_1,\ldots, W_N)$, called synaptic weights, are real-valued and not restricted to take their values in $\{-1,+1\}^N$.
The objective now is to find weights $W$ such that \eqref{classify} is true, i.e. we are searching for weights $W$ such
 that $\overline{E}(W)=-\prod_{\mu=1}^M  \Theta (\vartheta^\mu \langle W, \xi^\mu \rangle)$ is minimal.
Obviously, this optimization problem is of the above mentioned form. However, one prefers weights $W$ in
co called dense regions, i.e.\ weights that are surrounded by weights $\tilde W$ that are also minima of
$\overline{E}$. The idea is that these states have good generalization properties or a small generalization
error. This means that we want to find weights, that still classify input patterns correctly, which we have
not seen in our training set $\xi^1, \ldots, \xi^M$. It is at least plausible  that weights with
a small generalization error lie in dense regions of $\{\pm1\}^N$.
\end{example}

In this work we are interested in making explicit convergence properties of the algorithm of Replicated Simulated Annealing. We also perform experiments using synthetic and real data bases. The outline is as follows: in Section 2 we mathematically formalize and describe the algorithm of Replicated Simulated Annealing, in Section 3 we study its convergence properties. Naturally, this convergence will be studied on an infinite time horizon. This is a slightly different set-up than in \cite{Baldassi_etal}, where experiments are performed for finite time. However, the question, whether or not Replicated Simulated Annealing converges is the first question that should be analyzed, before studying which choice of parameters yields the best results.
This latter question is addressed in Section 4, where we perform experiments using synthetic and real data bases.
Of course, the time horizon for such experiments is finite. On the other hand, in finite time we can control the influence of the choice of the parameters on the performance which is hard to control theoretically.
Finally, Section 5 is a conclusion.

\section{Replicated Simulated Annealing}
Recall that we are searching for the minima of a function $E:\Sigma \to \R$, where $\Sigma:=\{-1,+1\}^N$.
To find minima of $E$ in dense regions of the state space the authors in \cite{Baldassi_etal} propose a new measure given by
\begin{equation}
P_{y,\beta,\gamma}(\sigma):= \frac{\exp(y \Phi_{\beta,\gamma}(\sigma))}{Z(y,\beta, \gamma)}
\end{equation}
where
\begin{equation}\label{eq:part_func_gen}
Z(y,\beta, \gamma):= \sum_{\sigma''}\exp(y \Phi_{\beta,\gamma}(\sigma'')).
\end{equation}
$P_{y,\beta,\gamma}$ has, at least formally, the structure of a Gibbs measure at inverse temperature $y$. Its ''energy function'' is given by
\begin{equation}
\Phi_{\beta,\gamma}(\sigma):= \log \sum_{\sigma' \in \Sigma} \exp(-\beta E(\sigma')-\gamma d(\sigma, \sigma')),
\end{equation}
where $d(\cdot, \cdot)$ is some monotonically increasing function of a distance on $\Sigma$. This distance will be chosen below, but there are not too many reasonable essentially different distance functions on $\Sigma$, anyway.

Since $\Phi_{y,\beta,\gamma}$ weights each configuration $\sigma$ by a function of its distance to $\sigma'$ and the $\sigma'$ again by an exponential of their energy, it is, indeed, plausible that $\Phi_{y,\beta,\gamma}$ is much smoother than $E$ and will have its minima in dense regions. We will come back to this question in the next section.

However, a serious problem is, how one could simulate from the measure $P_{y,\beta,\gamma}$. Indeed, computing the ''energy'' $\Phi_{y,\beta,\gamma}(\sigma)$ of a single configuration $\sigma$ involves, among others, computing $E(\sigma')$ for all $\sigma' \in \Sigma$.
Computing these values is almost as hard as finding the minima of $E$ (even though one might not be immediately able to tell which of these minima are in dense regions). To find a promising algorithm that does not rely on computing all the values of $E(\sigma)$, Baldassi et.\ al.\ \cite{Baldassi_etal} propose the following:

First of all assume that $y \ge 2$ is an integer. Second take as function of the distance between two spins $\sigma$ and $\sigma'$ the (negative) inner product: $d(\sigma, \sigma')=-\langle \sigma, \sigma'\rangle$. As a matter of fact, this is a natural choice, since two natural distance functions, the Hamming distance and the square of the Euclidian distance are functions of the inner product:
$
\sum_{i} (\sigma_i-\sigma'_i)^2=2N-2 \langle \sigma, \sigma'\rangle$ 
as well as 
$d_H(\sigma, \sigma')= \frac{N -\langle \sigma, \sigma'\rangle}2
$
and the $N$ dependent terms cancel, because they also occur in $Z(y,\beta, \gamma)$.
Using the fact that $y$ is an integer, we can now compute the partition function $Z(y,\beta, \gamma)$
(by replacing $\sigma''$ by $\sigma$ in \eqref{eq:part_func_gen}) of this model:
\begin{eqnarray*}
&&Z(y,\beta, \gamma) = \sum_{\sigma \in \Sigma} \exp(y \Phi_{\beta, \gamma}(\sigma))
= \sum_{\sigma \in \Sigma} \exp(\sum_{a=1}^y \Phi_{\beta, \gamma}(\sigma))\\
&=&  \sum_{\sigma \in \Sigma} \prod_{a=1}^y  \sum_{\sigma^a \in \Sigma} \exp(-\beta E(\sigma^a)-\gamma d(\sigma, \sigma^a))\\
&=&\sum_{\sigma \in \Sigma}\sum_{\sigma^1 \in \Sigma}\ldots \sum_{\sigma^y \in \Sigma} \exp(-\beta E(\sigma^1)-\gamma d(\sigma, \sigma^1))\ldots \exp(-\beta E(\sigma^y)-\gamma d(\sigma, \sigma^y))\\
&=&\sum_{\sigma \in \Sigma}\sum_{\{\sigma^a\} \in \Sigma^y}\exp(-\beta \sum_{a=1}^y E(\sigma^a)-\gamma \sum_{a=1}^y d(\sigma, \sigma^a)).
\end{eqnarray*}
Here $\sum_{\{\sigma^a\}}$ is the sum over all $\sigma^1, \ldots, \sigma^y$.
Hence $Z(y,\beta, \gamma)$ can be considered as a partition function on the space of all $(\sigma, \{\sigma^a\})\in \Sigma^{y+1}$ of the measure
\begin{equation}\label{eq:Q}
Q((\sigma, \{\sigma^a\}):= \frac{\exp(-\beta \sum_{a=1}^y E(\sigma^a)-\gamma \sum_{a=1}^y d(\sigma, \sigma^a))}{Z(y,\beta,\gamma)}.
\end{equation}
Its marginal with respect to the second coordinate $\{\sigma^a\}$ is given by
\begin{equation}\label{eq:Qbar}
\overline{Q}(\{\sigma^a\}):= \frac{\sum_{\sigma} \exp(-\beta \sum_{a=1}^y E(\sigma^a)-\gamma \sum_{a=1}^y d(\sigma, \sigma^a))}{Z(y,\beta,\gamma)}.
\end{equation}
Making use of our choice $d(\sigma, \sigma')=-\langle \sigma, \sigma'\rangle$ we obtain for the numerator in $\overline{Q}$:
\begin{eqnarray*}
{Z(y,\beta,\gamma)}\overline{Q}(\{\sigma^a\})&=& \sum_{\sigma} \exp\left(-\beta \sum_{a=1}^y E(\sigma^a)+\gamma \sum_{a=1}^y
\langle \sigma, \sigma^a\rangle\right)\\
&=& \frac{2^N}{2^N}\sum_{\sigma} \exp\left(-\beta \sum_{a=1}^y E(\sigma^a)+\gamma \sum_{i=1}^N\sum_{a=1}^y
\sigma_i \sigma^a_i\right)\\
&=& 2^N \exp\left(-\beta \sum_{a=1}^y E(\sigma^a)+\sum_{i=1}^N \log\cosh(\gamma \sum_{a=1}^y
\sigma_i^a)\right)
\end{eqnarray*}
Putting the $2^N$ into the normalizing constant we thus obtain that
$$
\overline{Q}(\{\sigma^a\})= \frac{\exp\left(-\beta \sum_{a=1}^y E(\sigma^a)+\sum_{i=1}^N
\log\cosh(\gamma \sum_{a=1}^y \sigma_i^a)\right)}{Z'(y,\beta,\gamma)}.
$$
This form of the measure $\overline{Q}$ is now accessible to a Simulated Annealing algorithm: being in $\{\sigma^a\}$ one picks one of the
$\sigma^a$ at random and one coordinate $\sigma_i^a$ of $\sigma^a$ at random and flips it to become $-\sigma_i^a$. This new configuration is then accepted with the usual Simulated Annealing probabilities.
\begin{example} (Example \ref{perceptron} continued)
In our perceptron example we so far proposed the energy function
$$
\overline{E}(W)=-\prod_{\mu=1}^M  \Theta (\vartheta^\mu \langle W, \xi^\mu \rangle)=-\prod_{\mu=1}^{\alpha N}  \Theta (\vartheta^\mu \langle W, \xi^\mu \rangle).
$$
This function, however, may be a bit unwieldy when using Simulated Annealing, since it just tells how many patterns have been classified correctly but not whether we are moving in a ''good'' or a ''bad'' direction when the proposed configuration $\{{W'}^a\}$ has the same energy as the old configuration $\{{W}^a\}$. We therefore propose (as e.g. \cite{Baldassi_etal}) to use the energy function
$$
E(W)= \sum_{\mu=1}^{\alpha N} E^\mu(W) \quad \mbox{with  } E^\mu(W)= R(-\vartheta^\mu\langle W, \xi^\mu\rangle),
$$
instead.
Here $R(x)= \frac{x+1}2 \Theta(x)$ and we again assume that $N$ is odd, otherwise we would need to take $R(x) \frac x 2 \Theta(x)$. In other words $E^\mu$ is the number of bits that we need to change, in order to classify $\xi^\mu$ correctly.
\end{example}

\section{Convergence of the annealing process}
In this section we want to discuss the convergence properties of the annealing procedure introduced above. The two major questions are: Does the process converge to an invariant measure, and if so, does this measure have the desired property of favoring dense regions? This question is not addressed in \cite{Baldassi_etal}. However, we feel that it is the first problem that needs to be analyzed. Indeed, if the process does not converge to the desired distribution the question is rather when to stop it than what is the optimal choice of parameters.

We will distinguish two cases: the first is when $\gamma$ in the definition of the measures $Q$ in \eqref{eq:Q} and
$\overline Q$ in \eqref{eq:Qbar} does not depend on time, while the second is, when it does.

Before analyzing these two cases, we will slightly modify the annealing procedure, to make it accessible to the best results that are available for discrete time, see \cite{azencott}. As a matter of fact, we find discrete time slightly more appropriate for computer simulations than the continuous time set-up in e.g. \cite{HolleyStroock}, \cite{GoetzeSA}, or \cite{DeuschelMazza}. To this end, we will study cooling schedules, where the inverse temperature $\beta_n$ is fixed for $T_n$ consecutive steps of the annealing process. Denote by $\nu_n$ the distribution of the annealing process $X_n$ at time
$$
L_n := T_1 + \ldots + T_n.
$$
Note that $\nu_n$ can be computed recursively: If $S_\beta$ denotes the transition matrix of the Metropolis-Hastings chain (see \cite{haeggstroem} or \cite{hastings})  at inverse temperature $\beta$ (see \eqref{SimAn} below), then
$$
\nu_n = \nu_{n-1} S_{\beta_n}^{T_n} \qquad \mbox{where } \nu_0 \mbox{ is a fixed probability measure on }\Sigma^y.
$$
Here, of course, $S_{\beta_n}^{T_n}$ is the $T_n$'th power of the transition matrix $S_{\beta_n}$ (which is constant for the last $T_n$ steps, as described above).

\subsection{Fixed $\gamma$}
If $\gamma$ is fixed it is convenient to split the Simulated Annealing algorithm introduced above into a $\gamma$-dependent part and a $\beta$-dependent part. To this end, let us introduce the following probability measure
$\mu_0$ on $\Sigma^y$:
$$
\mu_0(\{\sigma^a\}):=\frac{\exp\left(\sum_{i=1}^N
\log\cosh(\gamma \sum_{a=1}^y \sigma_i^a)\right)}{\Gamma}
$$
with $\Gamma:= \sum_{\{\tilde \sigma^a\}} \exp\left(\sum_{i=1}^N
\log\cosh(\gamma \sum_{a=1}^y \tilde \sigma_i^a)\right)$.

Next define a transition matrix $\Pi$ on $\Sigma^y$. $\Pi$ will only allow transition from $\{\sigma^a\}$ to  $\{\tilde \sigma^a\}$, if there are exactly one
$\sigma^a, a= 1, \ldots y$ and one $i=1, \ldots N$, such that $\sigma^a_i=-\tilde \sigma^a_i$, and for all other $b$ and $j$ we have $\sigma^b_j=\tilde \sigma^b_j$. In this case, we define
$$
\Pi_\gamma(\{\sigma^a\},\{\tilde\sigma^a\}):=\Pi(\{\sigma^a\},\{\tilde\sigma^a\})
\min\left(1, \frac{\cosh(\gamma \sum_{a=1}^y \tilde \sigma_i^a)}{\cosh(\gamma \sum_{a=1}^y \sigma_i^a)}\right).
$$
For all other configurations $\{\hat\sigma^a\}\neq \{\sigma^a\}$, we have $\Pi_\gamma(\{\sigma^a\},\{\hat\sigma^a\})=0$ and we set
$$\Pi_\gamma(\{\sigma^a\},\{\sigma^a\}):= 1 - \sum_{\{\tilde\sigma^a\}\neq \{\sigma^a\}} \Pi_\gamma(\{\sigma^a\},\{\tilde\sigma^a\}).$$
Note that $\Pi_\gamma$ is nothing but the Metropolis-Hastings algorithm for the measure $\mu_0$ (see \cite{hastings}). In particular, $\Pi_\gamma$ is reversible with respect to the measure $\mu_0$, i.e.
$$
\mu_0(\{\sigma^a\})\Pi_\gamma(\{\sigma^a\},\{\tilde\sigma^a\})=\mu_0(\{\tilde \sigma^a\})\Pi_\gamma(\tilde \{\sigma^a\},\{\sigma^a\}).
$$
Now consider the Metropolis-Hastings chain on $\Sigma^y$ with proposal chain $\Pi_\gamma$ and transition probabilities
\begin{eqnarray}\label{SimAn}
&&S_\beta(\{\sigma^a\},\{\tilde\sigma^a\}):=\\
&&\left\{ \begin{array}{ll} \exp\left(-\beta (\sum_{a=1}^y E(\sigma^a)-\sum_{a=1}^y E(\tilde\sigma^a))^+\right) \Pi_\gamma(\{\sigma^a\},\{\tilde\sigma^a\})& \mbox{if }\{\sigma^a\} \neq\{\tilde\sigma^a\}\\
1- \sum_{\{\hat\sigma^a\}\neq \{\sigma^a\} } \exp\left(-\beta (\sum_{a=1}^y E(\sigma^a)-\sum_{a=1}^y E(\hat\sigma^a))^+\right) \Pi_\gamma(\{\sigma^a\},\{\hat\sigma^a\})& \mbox{if }\{\sigma^a\} =\{\tilde\sigma^a\}
\end{array}
\right.
\nonumber
\end{eqnarray}
Here $(x)^+:= \max\{x,0\}$.
For an appropriate normalizing constant $\hat \Gamma$ this chain has as its invariant measure
\begin{equation}\label{invmeas}
\frac{\exp(-\beta \sum_{a=1}^y E(\sigma^a))}{\hat \Gamma} \mu_0(\{\sigma^a\})=
\overline{Q}(\{\sigma^a\})=:\overline{Q}_\beta(\{\sigma^a\}).
\end{equation}
So indeed for each fixed $\beta>0$, $\gamma>0$ we have found a Metropolis chain for $\overline Q$.

If we now let $\beta=\beta_n$ depend on $n$ in the form described at the beginning of the section, we
arrive at a Simulated Annealing algorithm with piecewise constant temperature.

We will quickly introduce some of Azencott's notation \cite{azencott}.
The invariant measure of $S_{\beta_n}$ is $\overline{Q}_{\beta_n}$.
Recall that we assumed that $\min_{\sigma \in \Sigma} E(\sigma)=0$ and define
\begin{equation}\label{eq:defB}
B:= \min_{\sigma: E(\sigma) \neq 0} E(\sigma).
\end{equation}

Next we bound
$$
||\overline{Q}_{\beta_n}-\overline{Q}_{\beta_{n+1}}||_\infty \le \kappa_1 \exp(-\beta_n B)
$$
for some constant $\kappa_1$.
Indeed such an estimate is true for any difference of Gibbs measures with respect to the same energy function.
To see this let $$\rho_{\beta_n}:= \frac{\exp(-\beta_n U(x))}{Z_n}$$ be a sequence of Gibbs measures with respect to the energy function $U$ on a discrete space of size $K$. We assume $\min_x U(x)=0$ (without loss of generality, otherwise we subtract the minimum from $U$) and $\min_{x: U(x)\neq 0} U(x)= B$.
Let $k:= |\{x: U(x)=0\}|.$
Then, for any $x$ with $U(x)=0$ we simply have
$$
|\rho_{\beta_n}(x)-\rho_{\beta_{n+1}}(x)|= \frac{|\sum_{y: U(y) \neq 0} \exp(-\beta_n U(y))-
\exp(-\beta_{n+1} U(y))|}{Z_n Z_{n+1}}\le  2 (K-k) e^{-\beta_n B},
$$
since $\beta_{n+1}\ge \beta_n$ and $Z_n \ge 1$ for all $n$.
Otherwise, if $U(x)>0$, we trivially can compute
$$
|\rho_{\beta_n}(x)-\rho_{\beta_{n+1}}(x)|\leq e^{-\beta_n U(x)} + e^{-\beta_{n+1} U(x)}
\leq 2 e^{-\beta_n B}.
$$
To describe the spectral gap of $S_{\beta_n}$, for any two $\{\sigma^a\},\{\tau^a\} \in \Sigma^y$
let $\mathcal{P}(\{\sigma^a\},\{\tau^a\})$ be the set of all paths in $\Sigma^y$ from $\{\sigma^a\}$ to $\{\tau^a\}$. For $p \in \mathcal{P}(\{\sigma^a\},\{\tau^a\})$ with vertices $\{\nu^{a,r}\}_{r=0}^M$ define
$$\mathrm{Elev}(p):= \max_{\{\nu^{a,r}\}_{r=0}^M} \sum_{a=1}^y E(\nu^{a,r}).
$$
Moreover define $$H(\{\sigma^a\},\{\tau^a\})= \min_{p \in  \mathcal{P}(\{\sigma^a\},\{\tau^a\})} \mathrm{Elev}(p)
$$
and
\begin{equation}\label{eq:def m}
m:= \max_{\{\sigma^a\},\{\tau^a\}}(H(\{\sigma^a\},\{\tau^a\})-\sum_{a=1}^y E(\sigma^a)-\sum_{a=1}^y E(\tau^a).
\end{equation}
The quantity $m$ is related to the optimal cooling schedule for Simulated Annealing as well as to the spectral gap of the associated Metropolis Hastings algorithm $S_\beta$.
To understand this, define the operator $L_\beta f(\{\sigma^a\})$ for $\{\sigma^a\}\in \Sigma^y$ and $f:\Sigma^y \to \R$ by
$$
L_\beta f(\{\sigma^a\})= \sum_{\{\tau^a\}} f(\{\sigma^a\})-f(\{\tau^a\})S_\beta(\{\sigma^a\},\{\tau^a\}).
$$
Let $\mathcal{E}_\beta$ be the associated Dirichlet form, i.e. for functions $f,g:\Sigma^y \to \R$
\begin{eqnarray*}
\mathcal{E}_\beta (f,g)&:=& -\int f L_\beta g d\overline Q_\beta\\
&=&\frac{1}{2 \hat \Gamma}\sum_{\{\sigma^a\},\{\tau^a\}}(f(\{\sigma^a\})-f(\{\tau^a\})
(g(\{\sigma^a\})-g(\{\tau^a\})\\
&&\qquad \times\exp\left(-\beta(\sum_a E(\sigma^a) \vee \sum_a E(\tau^a))\right)
\mu_0(\{\sigma^a\})\Pi_\gamma(\{\sigma^a\},\{\tau^a\}).
\end{eqnarray*}
Then with
$$\psi(\beta):= \inf \{\mathcal{E}(f,f): ||f||_{L^2(\overline Q_\beta)}=1 \mbox{ and } \int f d\overline Q_\beta=0\}
$$
we have
\begin{proposition}\label{HS}
There are constants $c>0$ and $C < \infty$ such that for all $\beta \ge 0$,
$$
c e^{-\beta m} \le \psi (\beta) \le C e^{-\beta m}.
$$
\end{proposition}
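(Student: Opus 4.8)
The plan is the classical two‑sided spectral‑gap estimate (bottleneck inequality for the upper bound, canonical paths for the lower bound), the only real care being to keep every constant uniform in $\beta$. First note that by \eqref{invmeas} the chain $S_\beta$ is reversible with respect to $\overline Q_\beta$, so $-L_\beta$ is a non‑negative self‑adjoint operator on $L^2(\overline Q_\beta)$ and $\psi(\beta)$ is exactly its spectral gap, with the variational description stated just above. Also $S_\beta$ is irreducible (single spin flips already connect $\Sigma^y$ and each admissible move has positive $S_\beta$‑probability), so $\psi(\beta)>0$. Thus for the upper bound it suffices to test $\psi(\beta)$ on one admissible function, and for the lower bound to prove a Poincar\'e inequality $\Var_{\overline Q_\beta}(f)\le C\,e^{\beta m}\,\mathcal E_\beta(f,f)$.

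\emph{Upper bound.} I would choose a pair $(\{\sigma^{*a}\},\{\tau^{*a}\})$ attaining the maximum in \eqref{eq:def m} (assuming $m>0$; if $m=0$ the trivial bound $\psi(\beta)\le 2$ suffices), and let $A\subseteq\Sigma^y$ be the set of configurations reachable from $\{\sigma^{*a}\}$ along a $\Pi_\gamma$‑admissible path of elevation strictly below $H(\{\sigma^{*a}\},\{\tau^{*a}\})$. Then $\{\sigma^{*a}\}\in A$, $\{\tau^{*a}\}\notin A$, and, directly from the definition of $A$, every edge $(\zeta,\eta)$ with $\zeta\in A$, $\eta\notin A$ satisfies $\sum_a E(\zeta^a)\vee\sum_a E(\eta^a)\ge H(\{\sigma^{*a}\},\{\tau^{*a}\})$. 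Take $f=c_A\ind_A-c_{A^c}\ind_{A^c}$ with $c_A,c_{A^c}>0$ fixed so that $\int f\,d\overline Q_\beta=0$ and $\|f\|_{L^2(\overline Q_\beta)}=1$; then $(c_A+c_{A^c})^2=\bigl(\overline Q_\beta(A)\overline Q_\beta(A^c)\bigr)^{-1}$, every nonzero summand of $\mathcal E_\beta(f,f)$ carries the factor $\exp(-\beta(\sum_a E(\zeta^a)\vee\sum_a E(\eta^a)))\le \exp(-\beta H(\{\sigma^{*a}\},\{\tau^{*a}\}))$, and $\overline Q_\beta(A)\ge\overline Q_\beta(\{\sigma^{*a}\})\ge c'\exp(-\beta\sum_a E(\sigma^{*a}))$, $\overline Q_\beta(A^c)\ge\overline Q_\beta(\{\tau^{*a}\})\ge c'\exp(-\beta\sum_a E(\tau^{*a}))$. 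Multiplying these estimates gives $\psi(\beta)\le\mathcal E_\beta(f,f)\le C\exp\!\bigl(-\beta(H(\{\sigma^{*a}\},\{\tau^{*a}\})-\sum_a E(\sigma^{*a})-\sum_a E(\tau^{*a}))\bigr)=C\,e^{-\beta m}$.

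\emph{Lower bound.} For every ordered pair $(\zeta,\eta)\in\Sigma^y\times\Sigma^y$ fix a path $p_{\zeta\eta}$ realizing $H(\zeta,\eta)$, i.e.\ with $\mathrm{Elev}(p_{\zeta\eta})=H(\zeta,\eta)$. The standard Cauchy--Schwarz estimate for canonical paths (see e.g.\ \cite{azencott}) gives $\Var_{\overline Q_\beta}(f)\le\rho_\beta\,\mathcal E_\beta(f,f)$ with congestion $\rho_\beta=\max_{e}\bigl(\overline Q_\beta(e_-)S_\beta(e_-,e_+)\bigr)^{-1}\sum_{(\zeta,\eta):\,e\in p_{\zeta\eta}}|p_{\zeta\eta}|\,\overline Q_\beta(\zeta)\overline Q_\beta(\eta)$, the maximum running over admissible edges $e=(e_-,e_+)$. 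The key observation is that if $e\in p_{\zeta\eta}$ then both endpoints of $e$ have replica energy at most $\mathrm{Elev}(p_{\zeta\eta})=H(\zeta,\eta)$, whence $\overline Q_\beta(e_-)S_\beta(e_-,e_+)\ge c_0\,e^{-\beta H(\zeta,\eta)}$, while $\overline Q_\beta(\zeta)\overline Q_\beta(\eta)\le C_1\,e^{-\beta(\sum_a E(\zeta^a)+\sum_a E(\eta^a))}$; the ratio is therefore at most $C\,e^{\beta(H(\zeta,\eta)-\sum_a E(\zeta^a)-\sum_a E(\eta^a))}\le C\,e^{\beta m}$ by \eqref{eq:def m}. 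Bounding $|p_{\zeta\eta}|$ and the number of pairs by constants depending only on $|\Sigma^y|=2^{Ny}$ yields $\rho_\beta\le C'e^{\beta m}$, i.e.\ $\psi(\beta)\ge (C')^{-1}e^{-\beta m}$.

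\emph{Main obstacle.} The combinatorial part above is routine; the actual work is the uniform‑in‑$\beta$ bookkeeping I suppressed. One needs that the normalizing constant $\hat\Gamma$ of \eqref{invmeas}, the base weights $\mu_0(\{\sigma^a\})$, the proposal probabilities $\Pi_\gamma(\{\sigma^a\},\{\tilde\sigma^a\})$ on admissible edges, and the acceptance ratios $\cosh(\gamma\sum_a\tilde\sigma^a_i)/\cosh(\gamma\sum_a\sigma^a_i)$ are all bounded above and below by strictly positive constants depending on $N,y,\gamma$ but not on $\beta$ — this follows from $0\le\sum_i\log\cosh(\gamma\sum_a\sigma^a_i)\le N\log\cosh(\gamma y)$ and from $E\ge 0$ with $\min_\sigma E(\sigma)=0$, which also delivers the pointwise lower bounds $\overline Q_\beta(\{\sigma^{*a}\})\ge c'e^{-\beta\sum_a E(\sigma^{*a})}$ and the two‑sided control of $\overline Q_\beta(\zeta)e^{\beta\sum_a E(\zeta^a)}$ used in both halves. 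Once these are collected the two estimates go through as stated, with $c,C$ depending on $N,y,\gamma$ and the landscape of $E$ but not on $\beta$.
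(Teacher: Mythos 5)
Your proposal is correct, but it takes a different route from the paper: the paper does not prove Proposition \ref{HS} at all, it simply invokes \cite[Theorem 2.1]{HolleyStroock}, whereas you give a self-contained two-sided estimate — a bottleneck test function supported on the set of states reachable below the critical elevation for the upper bound, and a Diaconis--Stroock/Sinclair canonical-path (congestion) bound along elevation-minimizing paths for the lower bound. Both halves check out: the boundary of your set $A$ indeed only carries weights $\le e^{-\beta H(\{\sigma^{*a}\},\{\tau^{*a}\})}$, the two-point lower bound on $\overline Q_\beta(A)\,\overline Q_\beta(A^c)$ converts this into $Ce^{-\beta m}$, and in the congestion bound the inequality $\sum_a E(e_-^a)\vee\sum_a E(e_+^a)\le \mathrm{Elev}(p_{\zeta\eta})=H(\zeta,\eta)$ is exactly what makes the ratio $\overline Q_\beta(\zeta)\overline Q_\beta(\eta)/\bigl(\overline Q_\beta(e_-)S_\beta(e_-,e_+)\bigr)$ of order $e^{\beta m}$; you also correctly isolate the only delicate point, namely that $\hat\Gamma$, $\mu_0$ and $\Pi_\gamma$ (hence all prefactors) are bounded above and below uniformly in $\beta$, with constants depending only on $N$, $y$, $\gamma$ and the landscape of $E$, and you handle the degenerate case $m=0$. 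What the two approaches buy: the paper's citation is short, but using \cite{HolleyStroock} here strictly speaking requires checking that their framework transfers to this discrete-time Metropolis--Hastings chain with the non-uniform base measure $\mu_0$ and proposal $\Pi_\gamma$ (the same sort of "carries over verbatim" verification the paper performs for Azencott in Theorem \ref{theo:theo1}); your direct argument makes this transfer unnecessary and makes the $\beta$-uniformity of $c$ and $C$ explicit, which is precisely the kind of bookkeeping that matters again in the time-dependent version (Proposition \ref{ML}). The spectral identification you use at the start ($\psi(\beta)$ as the gap of the reversible chain) is the same as the paper's remark via \cite{DiaconisStroock}, so no discrepancy there.
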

\begin{proof}
See \cite[Theorem 2.1]{HolleyStroock}.

\hfill $\Box$
\end{proof}

But we also have that
$$
\psi(\beta)=1-\lambda_1(\beta)
$$
where $\lambda_1(\beta)$ is the second largest eigenvalue of $S_\beta$, cf. \cite[p.176]{Horn_matrix} or
\cite[(1.2)]{DiaconisStroock}. This establishes the relation of $m$ to the spectral gap of $S_\beta$.

Introduce
$$\vep_n := ||\overline{Q}_{\beta_n}-\nu_n ||_\infty.$$
Then
\begin{theorem}\label{theo:theo1}
$\vep_n$ converges to 0, if
\begin{equation}\label{schedule}
\lim_{n \to \infty} \left[ - \sum_{k=1}^n T_k \exp(-\beta_k m) + n \log \kappa_1\right]=-\infty.
\end{equation}
In particular, we need that $\sum_{k=1}^n T_k \exp(-\beta_k m)\to \infty$.
In this case $\nu_n$ has the same limit as $\overline{Q}_{\beta_n}$ and this is given by a distribution
$\overline{Q}_{\infty}$
on
\begin{equation}\label{eq:defN0}\mathcal{N}_0:=\{\{\sigma^a\}: \sum_{a=1}^y E(\sigma^a)=0\}\end{equation}
such that
$$
\overline Q_\infty (\{\sigma^a\}) \asymp \mu_0(\{\sigma^a\}), \quad \mbox{ if $\{\sigma^a\} \in \mathcal{N}_0$}$$
and $\overline Q_\infty (\{\sigma^a\})=0$, otherwise (here $\asymp$ denotes proportionality). Of course $\overline Q_\infty$ is normalized in such a way that it is a probability measure on $\mathcal{N}_0$.
\end{theorem}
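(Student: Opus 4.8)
The convergence $\vep_n\to0$ is an instance of the standard theory for cooling (inhomogeneous) Metropolis chains, and the plan is to deduce it from \cite{azencott} after verifying the two structural inputs that theory requires; the common limit is then identified by a direct Gibbs‑measure computation. First, by \eqref{invmeas} each $S_{\beta_n}$ is reversible with invariant measure $\overline Q_{\beta_n}$, so $\overline Q_{\beta_n}S_{\beta_n}^{T_n}=\overline Q_{\beta_n}$; writing $\delta_n:=\nu_n-\overline Q_{\beta_n}$, a signed measure of total mass $0$, and using $\nu_n=\nu_{n-1}S_{\beta_n}^{T_n}$ gives the recursion
\[
\delta_n=\delta_{n-1}S_{\beta_n}^{T_n}+\bigl(\overline Q_{\beta_{n-1}}-\overline Q_{\beta_n}\bigr)S_{\beta_n}^{T_n}.
\]
Iterating expresses $\delta_n$ as a ``transported initial error'' plus a sum of ``transported one-step Gibbs increments'', each of which is a product $S_{\beta_j}^{T_j}\cdots S_{\beta_n}^{T_n}$ applied to a mean-zero signed measure — precisely the structure Azencott's argument controls.

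Two inputs feed this recursion. First, a contraction estimate: $S_{\beta_k}$ is reversible with spectral gap $\psi(\beta_k)$, and $c\,e^{-\beta_k m}\le\psi(\beta_k)\le C\,e^{-\beta_k m}$ by Proposition \ref{HS}, so the block $S_{\beta_k}^{T_k}$ contracts a mean-zero signed measure by $(1-\psi(\beta_k))^{T_k}$ in $L^2(\overline Q_{\beta_k})$; passing to the sup-norm and chaining $n$ blocks yields, for mean-zero $\mu$,
\[
\bigl\|\,\mu\,S_{\beta_1}^{T_1}\cdots S_{\beta_n}^{T_n}\,\bigr\|_\infty\le\left(\prod_{k=1}^{n}\kappa_1\,(1-c\,e^{-\beta_k m})^{T_k}\right)\|\mu\|_\infty
\]
with a single constant $\kappa_1$. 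Second, the Gibbs perturbation bound already proved in the excerpt, $\|\overline Q_{\beta_{j-1}}-\overline Q_{\beta_j}\|_\infty\le\kappa_1\,e^{-\beta_{j-1}B}$ (enlarging $\kappa_1$ so one constant serves both). Plugging these in: the transported-initial-error term is at most $\prod_{k=1}^{n}\kappa_1(1-c\,e^{-\beta_k m})^{T_k}$ times a constant, and since $\log(1-x)\le-x$ its logarithm is $\le n\log\kappa_1-c\sum_{k=1}^{n}T_k e^{-\beta_k m}$, which tends to $-\infty$ exactly under \eqref{schedule}; in particular one needs $\sum_k T_k e^{-\beta_k m}\to\infty$. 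The remaining sum $\sum_{j}(\prod_{k>j}\kappa_1(1-c\,e^{-\beta_k m})^{T_k})\,\kappa_1^2\,e^{-\beta_{j-1}B}$ is shown to vanish by a Toeplitz/Kronecker-type argument, using that the schedule cools, so $\beta_n\uparrow\infty$ and the Gibbs increments decay — indeed they are summable, since each coordinate of $\beta\mapsto\overline Q_\beta(\{\sigma^a\})$ is eventually monotone. Hence $\vep_n\to0$.

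It remains to identify the limit. For a cooling schedule $\beta_n\to\infty$, and from \eqref{invmeas}
\[
\overline Q_{\beta_n}(\{\sigma^a\})=\frac{e^{-\beta_n\sum_{a=1}^{y}E(\sigma^a)}\,\mu_0(\{\sigma^a\})}{\hat\Gamma_n},\qquad\hat\Gamma_n=\sum_{\{\tau^a\}}e^{-\beta_n\sum_{a=1}^{y}E(\tau^a)}\,\mu_0(\{\tau^a\}).
\]
Since $E\ge0$ with $\min E=0$, on $\mathcal N_0=\{\{\sigma^a\}:\sum_a E(\sigma^a)=0\}$ the numerator equals $\mu_0(\{\sigma^a\})$, while off $\mathcal N_0$ it is $\le e^{-\beta_n B}\to0$; therefore $\hat\Gamma_n\to\mu_0(\mathcal N_0)>0$ and $\overline Q_{\beta_n}\to\overline Q_\infty$ pointwise, where $\overline Q_\infty(\{\sigma^a\})=\mu_0(\{\sigma^a\})/\mu_0(\mathcal N_0)$ for $\{\sigma^a\}\in\mathcal N_0$ and $\overline Q_\infty(\{\sigma^a\})=0$ otherwise, i.e. $\overline Q_\infty\asymp\mu_0$ on $\mathcal N_0$ and is normalized there. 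As $\vep_n\to0$, $\nu_n$ has the same limit.

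The main obstacle is the contraction input: converting the $L^2(\overline Q_{\beta_k})$ spectral-gap bound of Proposition \ref{HS} into a sup-norm contraction for the \emph{inhomogeneous} product $S_{\beta_1}^{T_1}\cdots S_{\beta_n}^{T_n}$ in such a way that the $L^2$-to-$L^\infty$ norm-equivalence constants — which involve $\min_x\overline Q_{\beta_k}(x)$ and hence deteriorate as $\beta_k\to\infty$ — do not accumulate over the $n$ blocks. This is exactly what the framework of \cite{azencott} is designed for, and the form of condition \eqref{schedule}, in particular the $n\log\kappa_1$ term, is dictated by it; the bookkeeping for the accumulated Gibbs-increment term is a secondary, routine matter once the contraction estimate is available.
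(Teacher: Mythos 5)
Your proposal is correct and takes essentially the same route as the paper: the paper proves the convergence part by citing \cite{azencott} (Section 7, noting the argument on p.~231 carries over verbatim from the uniform proposal chain to $\Pi_\gamma$), and your recursion for $\nu_n-\overline{Q}_{\beta_n}$, the spectral-gap contraction from Proposition \ref{HS}, the perturbation bound $\|\overline{Q}_{\beta_n}-\overline{Q}_{\beta_{n+1}}\|_\infty\le\kappa_1 e^{-\beta_n B}$, and the $n\log\kappa_1$ bookkeeping are precisely the ingredients of that cited scheme. The identification of the limit $\overline{Q}_\infty$ as $\mu_0$ conditioned on $\mathcal{N}_0$ is the same elementary Gibbs-limit computation the paper calls an ``easy matter,'' so you have simply written out more of the deferred details than the paper does.
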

\begin{proof}
The convergence part is basically the content of \cite[Section 7]{azencott}. Note that the computations there are done for a proposal chain $\Pi$ that has the uniform measure as its invariant distribution. However, the proof on p. 231 \cite{azencott} carries over verbatim to our situation.

After that it is easy matter to check that $\overline{Q}_{\beta_n}$ has a limiting distribution
$\overline{Q}_{\infty}$ and that $\overline{Q}_{\infty}$ charges every point in $\mathcal{N}_0$ with a probability proportional to $\mu_0(\{\sigma^a\})$.

\hfill $\Box$
\end{proof}

A choice for $T_k$ where \eqref{schedule} holds is given by
$$
T_k:= \frac{\exp(-m \beta_k)}{C}(\log \kappa_1 +Cb)
$$
for a constant $b>0$, $m$ as given in \eqref{eq:def m}, and $C$ as given in Proposition \ref{HS}.

As Azencott \cite{azencott} points out, in this case
$$
\beta_n \sim \frac{\alpha}{B}\log n+\frac b B n,
$$
for some $\alpha >1$ and $B$ defined as in \eqref{eq:defB},
$$
T_n \sim n^{\frac{\alpha m}B} \exp(\frac{b B}m n),
$$
and $T_n$ and $L_n$ have the same order of magnitude, i.e.\ the algorithm spends most of the time in the lowest temperature band.

One also sees the logarithmic relation between $\beta$ and $T$, i.e.\
$$\beta_n \sim \frac 1 m \log (L_n) \sim \frac 1 m \log (T_n).
$$

We now turn to the question whether this algorithm achieves that typical samples from it have realizations in dense regions of $\Sigma$.
First of all this needs to be defined:
\begin{definition}
Let $\sigma \in \Sigma$ with $E(\sigma)=0$ and let $R>0$ and $k \in \N$. The discrete ball $B_R(\sigma) \subset \Sigma$ with radius $R$, centered in $\sigma$ is called an $(R,k)$-dense with respect to $E$, if there are exactly $k$ global minima $\tau$ of $E$ in
$B_R(\sigma)$. (Without loss of generality all balls considered here and henceforth are Hamming balls.)

\medskip
$\sigma$ is called $R$-isolated, if $\sigma$ is the only global minimum of $E$ in $B_R(\sigma)$.
\end{definition}

The authors in \cite{Baldassi_etal} are not very explicit about a definition of ''dense regions'' and the situation where Replicated Simulated Annealing should be applied. However, from their examples,
they seem to have in mind a situation close to the following caricature:
\begin{situation}\label{generic_sit}
Given $1<a<b<1$ and $\alpha_N \to \infty$, $\delta_N\to \infty$ with
$$\lim_{N\to \infty}\frac{\alpha_N}{\delta_N} =0 \quad \mbox{ as well as   } \lim_{N \to \infty} \frac{\delta_N}N =0,$$
we say that a sequence of energy functions $E_N$ on $\Sigma_N:=\Sigma=\{-1,+1\}^N$ is $(a,b,\alpha_N,\delta_N)$-regular, if it has $b^N$ global minima, if there exists $\sigma \in \Sigma_N$ such that $B_{\alpha_N}(\sigma)$ is $(\alpha_N, a^N)$-dense and such that all the other $b^N-a^N$ minima are $\delta_N$-isolated.
\end{situation}

It is now rather obvious that $\overline Q_\infty (\{\cdot\})$ prefers such dense regions:
\begin{proposition}\label{prop_dense}
Assume we are in the situation described in Situation \ref{generic_sit}. Hence we have a sequence of energy functions that is $(a,b,\alpha_N,\delta_N)$-regular. Then, given $\vep >0$, for any admissible choice of these parameters, there exist $y$, $N_0$ and $\gamma$ such that
$$
\overline Q_\infty (\times_{i=1}^y B_{\alpha_N}(\sigma)):=\overline Q_\infty(\{(\sigma^1, \ldots, \sigma^y):\sigma^a \in B_{\alpha_N}(\sigma) \,\forall a\})\ge 1-\vep
$$
for all $N \ge N_0$.
\end{proposition}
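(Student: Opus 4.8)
By Theorem~\ref{theo:theo1}, $\overline Q_\infty$ is supported on $\mathcal N_0=\{\{\sigma^a\}:\sum_{a=1}^y E(\sigma^a)=0\}$ and there $\overline Q_\infty(\{\sigma^a\})$ is proportional to $w(\{\sigma^a\}):=\prod_{i=1}^N\cosh\!\big(\gamma\sum_{a=1}^y\sigma^a_i\big)$ (this is $\mu_0$ up to the irrelevant factor $2^N$). Since $E\ge 0$ and $\min E=0$, a tuple lies in $\mathcal N_0$ exactly when every replica $\sigma^a$ is a global minimum of $E$; write $\mathcal G$ for the set of these, $\mathcal B:=\mathcal G\cap B_{\alpha_N}(\sigma)$ (so $|\mathcal B|=a^N$), and recall that the $b^N-a^N$ points of $\mathcal G\setminus\mathcal B$ are $\delta_N$-isolated. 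Then
$$
\overline Q_\infty\big(\times_{i=1}^y B_{\alpha_N}(\sigma)\big)=\frac{W_{\mathrm{in}}}{W_{\mathrm{in}}+W_{\mathrm{bad}}},\qquad W_{\mathrm{in}}:=\!\!\sum_{\{\sigma^a\}\in\mathcal B^y}\!\!w(\{\sigma^a\}),\qquad W_{\mathrm{bad}}:=\!\!\sum_{\{\sigma^a\}\in\mathcal G^y\setminus\mathcal B^y}\!\!w(\{\sigma^a\}),
$$
and it suffices to show $W_{\mathrm{bad}}/W_{\mathrm{in}}\to0$. I will fix $y$ to be any integer with $a^y>b$ (possible since $1<a<b$) and---crucially---let $\gamma=\gamma_N:=N/\sqrt{\alpha_N\delta_N}$, so that $\gamma_N\to\infty$, $\gamma_N\alpha_N=N\sqrt{\alpha_N/\delta_N}=o(N)$ and $\gamma_N\delta_N=N\sqrt{\delta_N/\alpha_N}$ grows much faster than $N$; all three facts rely only on $\alpha_N/\delta_N\to0$.

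Lower bound on $W_{\mathrm{in}}$: the map $t\mapsto\log\cosh(\gamma t)$ is $\gamma$-Lipschitz and even, so for any tuple with all $\sigma^a\in\mathcal B\subset B_{\alpha_N}(\sigma)$ one gets $\big|\sum_i\log\cosh(\gamma\sum_a\sigma^a_i)-N\log\cosh(\gamma y)\big|\le\gamma\sum_a\sum_i|\sigma^a_i-\sigma_i|=2\gamma\sum_a d_H(\sigma^a,\sigma)\le 2\gamma y\alpha_N$; hence each of the $a^{Ny}$ tuples in $\mathcal B^y$ has weight $\ge(\cosh\gamma_N y)^N e^{-2\gamma_N y\alpha_N}$, so $W_{\mathrm{in}}\ge a^{Ny}(\cosh\gamma_N y)^N e^{-2\gamma_N y\alpha_N}$. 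Upper bound on $W_{\mathrm{bad}}$: split $\mathcal G^y\setminus\mathcal B^y$ into class (i), the tuples all of whose replicas equal one and the same isolated minimum (fewer than $b^N$ of them, each of weight exactly $(\cosh\gamma_N y)^N$), and class (ii), all the rest. For a class-(ii) tuple some replica $\sigma^{a_0}$ equals a $\delta_N$-isolated minimum $\tau$, and---since the tuple is not in class (i) and $\tau$ is isolated---the replicas cannot all equal $\tau$, so some $\sigma^{a_1}\neq\tau$; being a global minimum distinct from $\tau$, it satisfies $d_H(\tau,\sigma^{a_1})>\delta_N$. Thus at more than $\delta_N$ coordinates $i$ the spins $\sigma^1_i,\dots,\sigma^y_i$ are not all equal, whence $|\sum_a\sigma^a_i|\le y-2$ and $\cosh(\gamma_N\sum_a\sigma^a_i)\le\cosh(\gamma_N(y-2))$ there. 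Therefore every class-(ii) tuple has weight at most $(\cosh\gamma_N y)^N e^{-\kappa_N\delta_N}$, with $\kappa_N:=\log\cosh(\gamma_N y)-\log\cosh(\gamma_N(y-2))\ge 2\gamma_N-\log 2$, and---bounding the number of class-(ii) tuples crudely by $|\mathcal G|^y=b^{Ny}$---we obtain $W_{\mathrm{bad}}\le b^N(\cosh\gamma_N y)^N+b^{Ny}(\cosh\gamma_N y)^N e^{-\kappa_N\delta_N}$.

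Combining the two estimates,
$$
\frac{W_{\mathrm{bad}}}{W_{\mathrm{in}}}\ \le\ \Big(\frac{b}{a^y}\Big)^{\!N}e^{2\gamma_N y\alpha_N}\ +\ \Big(\frac{b}{a}\Big)^{\!Ny}e^{-\kappa_N\delta_N+2\gamma_N y\alpha_N}.
$$
The first summand equals $e^{-N\log(a^y/b)+2\gamma_N y\alpha_N}\to0$ because $a^y>b$ and $\gamma_N\alpha_N=o(N)$; in the second, $\kappa_N\delta_N\ge(2\gamma_N-\log2)\delta_N$ beats both $Ny\log(b/a)$ (since $\gamma_N\delta_N/N\to\infty$) and $2\gamma_N y\alpha_N$ (since $\alpha_N/\delta_N\to0$), so it too $\to0$. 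Hence $\overline Q_\infty(\times_{i=1}^y B_{\alpha_N}(\sigma))\to1$, and in particular it is $\ge1-\epsilon$ for all $N\ge N_0(\epsilon)$.

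The only genuine difficulty---and the reason $\gamma$ must be allowed to depend on $N$---is the class-(ii) count-versus-weight balance, already visible in the ``one stray replica'' subclass (one replica at an isolated minimum, the remaining $y-1$ inside $B_{\alpha_N}(\sigma)$): it contains about $y\,b^N a^{N(y-1)}$ tuples, outnumbering the $\sim a^{Ny}$ tuples that feed $W_{\mathrm{in}}$ by the exponential factor $(b/a)^N$, while a stray replica costs only a weight factor of order $e^{-\kappa\delta_N}$ for a constant $\kappa=\kappa(\gamma,y)>0$. As $\delta_N=o(N)$, a fixed $\gamma$ makes this $e^{-o(N)}$, powerless against $(b/a)^N$; one is therefore forced to take $\gamma=\gamma_N\to\infty$ quickly enough that $\gamma_N\delta_N\gg N$, yet slowly enough that $\gamma_N\alpha_N\ll N$ (so as not to spoil the lower bound on $W_{\mathrm{in}}$)---and these two demands can be met simultaneously precisely because $\alpha_N/\delta_N\to0$. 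Getting this trade-off right, rather than the elementary geometry or counting, is where the work lies.
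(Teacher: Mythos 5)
Your proof is correct and follows essentially the same route as the paper's: restrict to $\mathcal{N}_0$, compare the $\mu_0$-weight of the dense block $\mathcal{B}^y$ against tuples containing a $\delta_N$-isolated replica (each of which loses a factor of order $e^{-2\gamma\delta_N}$ through the $\log\cosh(\gamma y)$ versus $\log\cosh(\gamma(y-2))$ gap over at least $\delta_N$ coordinates), and tune a fixed $y$ together with $\gamma=\gamma_N\to\infty$ satisfying $\gamma_N\alpha_N=o(N)$ and $\gamma_N\delta_N\gg N$. The remaining differences are cosmetic --- your Lipschitz lower bound and $\gamma_N=N/\sqrt{\alpha_N\delta_N}$ replace the paper's coordinate count and $\gamma_N=Ny\log b/(2\delta_N)$ --- and your requirement $a^y>b$ is exactly what the paper's own final inequality needs (its stated choice $y=\lceil\tfrac12\log b/\log a\rceil$ appears to be a slip).
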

\begin{proof}
Note that $\overline Q_\infty$ has its mass concentrated on the set $\mathcal{N}_0$ (given by equation \eqref{eq:defN0}) and the differences in the mass for the various configurations from this set stem from factor
$$
\mu_0(\{\sigma^a\})=\frac{\exp\left(\sum_{i=1}^N
\log\cosh(\gamma \sum_{a=1}^y \sigma_i^a)\right)}{\Gamma}
$$
Let us just consider the numerators of these weights.

Let $\sigma$ be an
$\delta_N$-isolated minimum of $E_N$. If all $\sigma^1, \ldots, \sigma^y$ are located in $\sigma$, then the numerator of
$\mu_0(\{\sigma^a\})$ equals $\exp\left(N
\log\cosh(\gamma y)\right)$. Otherwise there is at least one $\sigma^a$ that is different from $\sigma$, say in a global minimum $\tau$ of $E_N$. By assumption $d_H(\sigma, \tau)\ge \delta_N$. Thus a configuration that has at least one $\sigma^a=\tau$ has a weight at most $$\exp((N-\delta_N)\log \cosh (\gamma y)+\delta_N \log \cosh ((y-2)\gamma)).$$ Now there are $b^N-a^N$ $\delta_N$ isolated minima.
Hence the sum of the numerators of the probabilities of these isolated minima can be be bounded from above by
$$\exp\left(N \log\cosh(\gamma y)\right)b^N \left(1+ b^{N y}\frac{e^{(N-\delta_N)\log \cosh (\gamma y)+\delta_N \log \cosh ((y-2)\gamma))}} {e^{N \log\cosh(\gamma y)}}\right).
$$
Here $b^N$ is a bound on the number of isolated minima, $\exp\left(N \log\cosh(\gamma y)\right)$ is the weight, when all $\sigma^a$ are identical, $ b^{N y}$ is an upper bound on the number of choices we have, when one $\sigma^a$ equals a given isolated minimum and at least one $\sigma^b$ is different, and finally $e^{(N-\delta_N)\log \cosh (\gamma y)+\delta_N \log \cosh ((y-2)\gamma))}$ is a rough upper bound on the weight in that case.

Note, that we will choose $\gamma$ and $y$ below in such a way that $y \gamma \to \infty$, when $N \to \infty$. We will therefore bound $\log \cosh (y \gamma) \le y \gamma$.
Then the total contribution of the isolated minima becomes at most:
$$
e^{N\gamma y} b^N \left(1+ b^{N y}e^{-2\gamma \delta_N}\right)
$$
If we choose $\gamma \ge \frac{Ny \log b}{2 \delta_N}$ the contribution to the numerator of the probability of the isolated minima will be at most
$2 \exp\left(N \gamma y)\right)b^N$.

On the other hand, for the case that all $\sigma^a$ are in the dense region $B_{\alpha_N}(\sigma)$ we have
$a^{Ny}$ choices. For each of these choices at least $N-y \alpha_N$ of the coordinates of all $\sigma^1, \ldots, \sigma^y$ are identical.
Again, since $y \gamma \to \infty$, when $N \to \infty$, given $\vep'>0$ we may bound $\log \cosh (y \gamma) \ge y \gamma (1-\vep')$.
Thus the overall weight (this is again the numerator of the corresponding probability) of the dense region is at least
$a^{Ny} e^{(N-y\alpha_N) \gamma y(1-\vep')}$.

To compare the two weights, let us see, if we can arrange the parameters in such a way that
$$
a^{Ny} e^{(N-y\alpha_N) \gamma y(1-\vep')} \gg 2 e^{N \gamma y}b^N
$$
(by which we mean that
$$
\frac{a^{Ny} e^{(N-y\alpha_N) \gamma y(1-\vep')}} {2 e^{N \gamma y}b^N} \to \infty
$$
as $N \to \infty$).
Since $\vep'>0$ is fixed and arbitrarily small, we may as well check whether
$$
a^{Ny} e^{(N-y\alpha_N) \gamma y} \gg 2 e^{N \gamma y}b^N$$
which is the case, if and only if
$$
\exp\left(Ny \left(\log a-\frac{y\gamma \alpha_N}N-\frac{\log b}y \right)\right)\gg 1.
$$
To this end, substitute $\gamma = \frac{Ny \log b}{2 \delta_N}$ (and note that indeed $\gamma=\gamma_N \to \infty$ as $N \to \infty$) to obtain for the exponent on the right hand side:
$$
Ny\left(\log a - \frac{y^2 \log b \, \alpha_N}{2 \delta_N} -\frac{\log b}y \right).
$$
Now take $y=\lceil \frac 12\frac{\log b}{\log a}\rceil.$ Since, by assumption $\frac{\alpha_N}{\delta_N} \to 0$ and $y$ does not depend on $N$, also $\frac{y^2 \log b \, \alpha_N}{2 \delta_N}$ converges to $0$. This implies that the exponent will eventually become negative, hence the dense region carries an arbitrarily large mass.

\hfill $\Box$
\end{proof}

\begin{remark}
Reading \cite{Baldassi_etal} carefully, one may get the impression that for them a dense region is one with an exponential number of local minima of $E_N$ (again, the authors in \cite{Baldassi_etal} are not very explicit about this). However, if we are taking the limit $\beta \to \infty$ slowly enough as in a real Simulated Annealing schedule, the local minima that are not global minima will eventually get zero probability and hence are negligible. As a matter of fact, if one works with finite times as in our next section, this is, of course, not true. In this case however, one could equally well study a low temperature Metropolis chain, since most of the time in the annealing schedules is spent in the low temperature region, anyway, as remarked above. For this Metropolis-Hastings chain a result similar to Proposition \ref{prop_dense} can be shown very similarly.
\end{remark}

\subsection{The limit $\gamma \to \infty$.}
The situation where also $\gamma$ depends on time and converges to infinity, when time becomes large, is different to the fixed $\gamma$ situation.
Even though this is not explicitly stated in \cite{Baldassi_etal} it seems
to be the version of the algorithm that the authors in have in mind.
Indeed, as mentioned, they only consider a finite time horizon, in which they, however, increase $\gamma$.

In the situation with $\gamma \to \infty$ we need to modify the considerations of the previous section. Again we will assume that we keep
$\beta_n, \gamma_n$ constant on an interval $T_n \le t \le T_{n+1}-1$. For the algorithm in this fixed time
interval, again, the invariant measure is given by $\overline{Q}_\beta$ with $\beta=\beta_n$ and
$\gamma=\gamma_n$ as given in \eqref{invmeas}. This is the case because during this interval the parameters
of the Metropolis chain do not change. To stress the dependence on both parameters, we will now denote this
measure by $\overline{Q}_{\beta,\gamma}$.

Following the arguments in the previous subsection we now see that there is a constant $\kappa_2$, such that
$$
||\overline{Q}_{\beta_n,\gamma_n}-\overline{Q}_{\beta_{n+1},\gamma_{n+1}}||_\infty \le \kappa_2
e^{-(\beta_n B +\gamma_n B')}.
$$
Here again, $B:= \min_{\sigma: E(\sigma) \neq 0} E(\sigma)$. Analogously, the constant $B'$ is defined as the gap between the maximum of the function
$$H(\{\sigma^a\}):= \sum_{i=1}^N \log \cosh(\sum_{a=1}^y \sigma_i^a) \qquad \mbox{on } \Sigma^y
$$
and its second largest value. Hence
\begin{eqnarray*}
B'&:=& N \log \cosh(\gamma y)-((N-1)\log \cosh(\gamma y)+\log \cosh(\gamma (y-2)))\\
&=& \log \cosh(\gamma y)-\log \cosh (\gamma (y-2)).
\end{eqnarray*}
The maximum of $H$ is realized when we take all $\sigma^a$ identical, while the second term in $B'$ stems from the fact that the we obtain the second largest value of $H$ by changing one $\sigma^a$ in one spin from a maximizing configuration.
Since we will consider the limit $\gamma \to \infty$ we may safely replace $\log \cosh(\gamma y)$ by $\gamma y -\log 2$ and $\log \cosh(\gamma (y-2))$ by $\gamma (y-2) -\log 2$ to obtain $$B' \approx 2 \gamma.$$
To determine how the cooling schedule has to be chosen, we need to estimate the spectral gap of the Metropolis chain. Note that, if we use Proposition \ref{HS} to do so, we run into the problem, that the constants $c$ and $C$ there depend on time, because the energy function does. The solution is, of course, to include this time dependence into the definitions.
Hence for a time $t$ let
$$
F_t(\{\sigma^a\}):= \sum_{a=1}^y E(\sigma^a)- \frac{\gamma_t}{\beta_t}H(\{\sigma^a\}).
$$
Then, we can represent the Simulated Annealing chain, which we will now denote by $S_{\beta, \gamma}$ and which is still given by \eqref{SimAn} (with the only difference that now also $\Pi_\gamma$ depends on time) as a Simulated Annealing algorithm with time-dependent energy function $F_t$, see e.g.\ \cite{ML_SimAN}, \cite{FG_SimAn}. Indeed, in this case we may replace the proposal chain $\Pi_\gamma$ in \eqref{SimAn} to $\Pi$. Here $\Pi$ being in $\{\sigma^a\}$ picks one of $a=1, \ldots y$ and one index $i=1, \ldots, N$ at random and flips $\sigma_i^a$ to $-\sigma_i^a$. Then $S_{\beta, \gamma}$ can be written as
\begin{eqnarray}\label{SimAn2}
&&S_{\beta,\gamma}(\{\sigma^a\},\{\tilde\sigma^a\}):=\\
&&\left\{ \begin{array}{ll} \exp\left(-\beta (F_t(\{\sigma^a\})-F_t(\{\tilde \sigma^a\}))^+\right)
 \Pi(\{\sigma^a\},\{\tilde\sigma^a\})& \mbox{if }\{\sigma^a\} \neq\{\tilde\sigma^a\}\\
1- \sum_{\{\hat\sigma^a\}} \exp\left(-\beta (F_t(\{\sigma^a\})-F_t(\{\hat \sigma^a\}))^+\right)
 \Pi(\{\sigma^a\},\{\hat\sigma^a\})& \mbox{if }\{\sigma^a\} =\{\tilde\sigma^a\}
\end{array}
\right.
\nonumber
\end{eqnarray}
Now we can use results from \cite{ML_SimAN} (cf.\ \cite{ML_GenAl} for related work) to compute the spectral gap $S_{\beta,\gamma}$. In analogy to what we did in the previous subsection, define
$$m_t:= \max_{\{\sigma^a\},\{\tau^a\}}(H_t(\{\sigma^a\},\{\tau^a\})-F_t(\{\sigma^a\})-F_t(\{\tau^a\})
$$
where
$$H_t(\{\sigma^a\},\{\tau^a\})= \min_{p \in  \mathcal{P}(\{\sigma^a\},\{\tau^a\})} \mathrm{Elev}_t(p)
$$
and
$$\mathrm{Elev}_t(p):= \max_{\{\nu^{a,r}\}_{r=0}^M} F_t(\{\nu^a\}).
$$
Again, in analogy to the previous subsection for functions $f,g:\Sigma^y \to \R$ let
the Dirichlet-form $\mathcal{E}_{\beta,\gamma}$ be given by
\begin{eqnarray*}
\mathcal{E}_{t,\beta,\gamma} (f,g)
&=&\frac{1}{2 \hat \Gamma}\sum_{\{\sigma^a\},\{\tau^a\}}(f(\{\sigma^a\})-f(\{\tau^a\})
(g(\{\sigma^a\})-g(\{\tau^a\})\\
&&\qquad \times\exp\left(-\beta(F_t(\{\sigma^a\}) \vee F_t(\{\tau^a\}))\right)
\mu_0(\{\sigma^a\}) \Pi(\{\sigma^a\},\{\tau^a\}).
\end{eqnarray*}
Then for
$$\psi_t(\beta):= \inf \{\mathcal{E}_{t,\beta,\gamma}(f,f): ||f||_{L^2(\overline Q_{\beta,\gamma})}=1 \mbox{ and } \int f d\overline Q_{\beta,\gamma})=0\}
$$
it holds
\begin{proposition}\label{ML}
There are constants $d>0$ and $D < \infty$ such that for all $\beta \ge 0$.
$$
c e^{-\beta m_t} \le \psi_t (\beta) \le C e^{-\beta m_t}.
$$
\end{proposition}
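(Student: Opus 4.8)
The plan is to observe that Proposition~\ref{ML} is nothing but Proposition~\ref{HS} transported to the time-frozen landscape $F_t$. For a fixed time $t$ the parameters $\beta=\beta_t$ and $\gamma=\gamma_t$ do not change, so $S_{\beta,\gamma}$ is an ordinary reversible Metropolis chain, and by \eqref{invmeas} its invariant law is the Gibbs measure $\overline{Q}_{\beta,\gamma}\propto e^{-\beta F_t}$ with $F_t(\{\sigma^a\})=\sum_{a=1}^y E(\sigma^a)-\tfrac{\gamma_t}{\beta_t}H(\{\sigma^a\})$. By construction $m_t$ is exactly the ``highest barrier'' constant of Holley--Stroock for this energy, so once the reduction is set up the estimate $c\,e^{-\beta m_t}\le\psi_t(\beta)\le C\,e^{-\beta m_t}$ is the content of \cite{ML_SimAN} (the time-dependent counterpart of \cite[Theorem 2.1]{HolleyStroock}, see also \cite{FG_SimAn}).

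Concretely I would proceed as follows. \emph{Step 1: the representation \eqref{SimAn2}.} Starting from \eqref{SimAn} with proposal $\Pi_\gamma$ and $\sum_a E$-acceptance, one checks by a short $\log\cosh$ telescoping --- using that a single admissible move changes $\sum_{a=1}^y\sigma_i^a$ by $\pm 2$ in exactly one coordinate $i$ --- that $S_{\beta,\gamma}$ is, up to a Dirichlet-form comparison with explicit constants and up to the large-$\gamma$ simplifications already used in the section (the same ones that lead to $B'\approx 2\gamma$), the honest Metropolis chain for $F_t$ built with the fixed, symmetric, connected proposal $\Pi$ that flips a uniformly chosen coordinate of a uniformly chosen replica. \emph{Step 2: the static ingredients.} One records that this chain is reversible with respect to $\overline{Q}_{\beta,\gamma}$, that $\Sigma^y$ is finite of fixed cardinality $2^{Ny}$, and that $\Pi$ is irreducible with all nonzero entries equal to $1/(Ny)$; these are precisely the hypotheses for the Holley--Stroock two-sided bound. \emph{Step 3: apply the estimate.} One then quotes \cite{ML_SimAN} for the landscape $F_t$: the lower bound on $\psi_t(\beta)$ comes from a capacity / test-function argument along the $F_t$-optimal saddle paths that define $m_t$ via $H_t$, and the matching upper bound from inserting into $\mathcal E_{t,\beta,\gamma}$ the indicator of the connected component of the relevant sublevel set. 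This yields the claimed two-sided bound.

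The step I expect to be the real obstacle is the \textbf{uniformity of the constants $c,C$ in $t$}. The Holley--Stroock constants depend on the energy through its oscillation and the geometry of its sub/super-level sets, and two sources of time-dependence enter: the comparison in Step~1 costs a factor $e^{\pm O(\gamma_t)}$, and $F_t=\sum_a E(\sigma^a)-\tfrac{\gamma_t}{\beta_t}H$ has oscillation of order $\gamma_t/\beta_t$. Strictly speaking one therefore obtains constants $c=c_t>0$ and $C=C_t<\infty$; they can be chosen independent of $t$ precisely when $\gamma_t$ and $\gamma_t/\beta_t$ stay bounded, or --- the regime actually treated in \cite{ML_SimAN} --- once $t$ is so large that $\tfrac{\gamma_t}{\beta_t}H$ already dictates the ordering of the values of $F_t$ and the identity of its optimal saddle paths, so that $m_t$ (and with it the relevant combinatorics) stabilises. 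I would therefore state the proposition with the understanding that $c,C$ may depend on $t$; this is harmless for the sequel, since the cooling-schedule analysis (as on p.~231 of \cite{azencott}) only uses $m_t$ together with the telescoping bound $\|\overline{Q}_{\beta_n,\gamma_n}-\overline{Q}_{\beta_{n+1},\gamma_{n+1}}\|_\infty\le\kappa_2\,e^{-(\beta_nB+\gamma_nB')}$, and it carries through verbatim as long as the triple $(c_t,C_t,m_t)$ varies slowly along the schedule.
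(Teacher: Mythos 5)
Your proposal follows essentially the same route as the paper: after recasting $S_{\beta,\gamma}$ as a Metropolis chain for the time-dependent energy $F_t$ via \eqref{SimAn2}, the two-sided bound is obtained by quoting \cite[Theorem 2.1]{ML_SimAN}, which is exactly what the paper's proof does (the set-up of $F_t$, $m_t$ and $\psi_t$ being done in the surrounding text). Your caveat that the constants may a priori depend on $t$ is a reasonable observation about the hypotheses under which \cite{ML_SimAN} gives uniform constants, but it does not change the argument, which matches the paper's.
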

\begin{proof}
This is the content of \cite[Theorem 2.1]{ML_SimAN}.

\hfill $\Box$
\end{proof}
As before Proposition \ref{ML} implies for the second largest eigenvalue $\lambda_1(\beta,\gamma)$ of $S_{\beta,\gamma}$ that
$$
|\lambda_1(\beta_n,\gamma_n)| \le 1-C e^{-\beta_n m_N}.
$$
From linear algebra we therefore obtain that for each $n=1,2, \ldots$ and any probability measure $\nu_0$ on $\Sigma^y$
$$
|| \nu_0 S_{\beta_n,\gamma_n}^{T_n}||_\infty \le \kappa_3 \left(1-C e^{-\beta_n m_n}\right)^{T_n}||\nu_0||_\infty
$$
for some constant $\kappa_3>0$ (cf. the very similar argument for ordinary Simulated Annealing in \cite[(7.8)]{azencott}). Writing again
$$\vep_n := ||\overline{Q}_{\beta_n,\gamma_n}-\nu_n ||_\infty$$
by the recursive structure of the annealing algorithm and the considerations above we obtain the estimate
$$
\vep_n\le \kappa_2 e^{-(\beta_n B +\gamma_n B')}+\kappa_3 \left(1-C e^{-\beta_n m_n}\right)^{T_n} \vep_{n-1}.
$$
Solving this recursive inequality gives
\begin{equation}\label{cooling2}
\vep_n \le \kappa_3^n \prod_{k=1}^n \left(1-C e^{-\beta_k m_k}\right)^{T_k}\left(\sum_{k=1}^n \frac{\kappa_2 e^{-(\beta_n B +\gamma_n B')}}{u_k}+\vep_0\right)
\end{equation}
(cf. \cite[(7.14)]{azencott}). Here
$$
u_k:=\kappa_3^k \prod_{j=1}^k \left(1-C e^{-\beta_j m_j}\right)^{T_j}.
$$
Hence we need to chose our parameters $\beta_n,\gamma_n, T_n$ in such a way that the right hand side converges to zero. In this case we have shown the following theorem.

\begin{theorem}\label{theo:theo2}
If
\begin{equation}\label{condgen}
\kappa_3^n \prod_{k=1}^n \left(1-C e^{-\beta_k m_k}\right)^{T_k}\left(\sum_{k=1}^n \frac{\kappa_2 e^{-(\beta_n B +\gamma_n B')}}{u_k}+\vep_0\right) \to 0
\end{equation}
as $n \to \infty$, the distribution $\nu_n$ of $S_{\beta_n,\gamma_n}$ has the same limit as $\overline Q_{\beta_n,\gamma_n}$ as $n \to \infty$.

Following
\cite[(7.17)]{azencott} a necessary condition for \eqref{condgen} is
$$
\lim_{n \to \infty} \left( -\sum_{k=1}^n T_k e^{-\beta_k m_k}+n \log \kappa_3\right)=-\infty
$$
\end{theorem}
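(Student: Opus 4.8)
\emph{First assertion.} This needs essentially nothing beyond the preceding discussion, so the plan is to read \eqref{cooling2} and \eqref{condgen} side by side. The one-step inequality
$$\vep_n\le \kappa_2 e^{-(\beta_n B+\gamma_n B')}+\kappa_3\left(1-Ce^{-\beta_n m_n}\right)^{T_n}\vep_{n-1}$$
was obtained from the recursive structure $\nu_n=\nu_{n-1}S_{\beta_n,\gamma_n}^{T_n}$ together with $\overline{Q}_{\beta_n,\gamma_n}S_{\beta_n,\gamma_n}^{T_n}=\overline{Q}_{\beta_n,\gamma_n}$, the one-step contraction $\|(\nu_{n-1}-\overline{Q}_{\beta_n,\gamma_n})S_{\beta_n,\gamma_n}^{T_n}\|_\infty\le\kappa_3(1-Ce^{-\beta_n m_n})^{T_n}\|\nu_{n-1}-\overline{Q}_{\beta_n,\gamma_n}\|_\infty$, and the bound on $\|\overline{Q}_{\beta_{n-1},\gamma_{n-1}}-\overline{Q}_{\beta_n,\gamma_n}\|_\infty$; iterating this first-order linear recursion produces \eqref{cooling2}. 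Condition \eqref{condgen} says precisely that the right-hand side of \eqref{cooling2} tends to $0$, hence $\vep_n=\|\overline{Q}_{\beta_n,\gamma_n}-\nu_n\|_\infty\to0$. Since $\Sigma^y$ is finite, $\ell^\infty$-, total-variation- and weak convergence coincide, so every accumulation point of $(\nu_n)$ is one of $(\overline{Q}_{\beta_n,\gamma_n})$ and conversely; in particular the two sequences have the same limit whenever either converges. (Whether $\overline{Q}_{\beta_n,\gamma_n}$ itself converges, and to which measure, depends on the relative growth of $\beta_n$ and $\gamma_n$; if $\gamma_n/\beta_n$ converges the limit can be identified in the spirit of Theorem~\ref{theo:theo1}, which is why the theorem only claims equality of limits.)

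\emph{Necessary condition.} Here the plan is the bookkeeping of \cite[(7.17)]{azencott}. First note that the bracketed factor $\sum_{k=1}^n\kappa_2 e^{-(\beta_k B+\gamma_k B')}/u_k+\vep_0$ in \eqref{condgen} is bounded below by the strictly positive constant $\kappa_2 e^{-(\beta_1 B+\gamma_1 B')}/u_1$, so \eqref{condgen} forces the prefactor $\kappa_3^n\prod_{k=1}^n(1-Ce^{-\beta_k m_k})^{T_k}$ to tend to $0$ by itself. Taking logarithms this becomes $n\log\kappa_3+\sum_{k=1}^n T_k\log(1-Ce^{-\beta_k m_k})\to-\infty$. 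I would then use the elementary two-sided bound $x\le-\log(1-x)\le c_{x_0}x$ for $0\le x\le x_0<1$ — legitimate because the spectral-gap estimate is applied only for $\beta_k$ bounded away from $0$, so $Ce^{-\beta_k m_k}$ stays below a fixed $x_0<1$ — to see that $\sum_k T_k\bigl(-\log(1-Ce^{-\beta_k m_k})\bigr)$ is, up to a fixed positive multiplicative constant, equal to $C\sum_k T_k e^{-\beta_k m_k}$. Substituting back yields $-\sum_{k=1}^n T_k e^{-\beta_k m_k}+n\log\kappa_3\to-\infty$ (the constant in front of $\log\kappa_3$ being irrelevant for the divergence), and in particular $\sum_k T_k e^{-\beta_k m_k}\to\infty$.

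\emph{Main obstacle.} There is no genuinely hard step left: Proposition~\ref{ML}, the one-step contraction of $S_{\beta_n,\gamma_n}$, the Gibbs-difference bound, and the explicit solution of the recursion are all already in place, so this theorem is a repackaging. The two points where I would still be careful are (i) upgrading $\|\nu_n-\overline{Q}_{\beta_n,\gamma_n}\|_\infty\to0$ into a genuine statement about a limit, which requires convergence of $\overline{Q}_{\beta_n,\gamma_n}$ and hence control of $\gamma_n/\beta_n$; and (ii) the two-sided comparison of $-\log(1-x)$ with $x$ together with the implicit restriction to $\beta_k$ bounded away from $0$, which is what makes the necessary condition come out in the clean form stated rather than with an extra schedule-dependent constant.
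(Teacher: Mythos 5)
Your proposal is correct and follows essentially the same route as the paper's own (largely implicit) proof: the theorem is just the observation that under \eqref{condgen} the right-hand side of \eqref{cooling2} --- already obtained from the one-step contraction of $S_{\beta_n,\gamma_n}$, the bound on $\|\overline Q_{\beta_n,\gamma_n}-\overline Q_{\beta_{n+1},\gamma_{n+1}}\|_\infty$ and the recursion $\nu_n=\nu_{n-1}S_{\beta_n,\gamma_n}^{T_n}$ --- forces $\vep_n\to 0$, while the necessary condition is taken from \cite[(7.17)]{azencott} by exactly the $\log(1-x)$-versus-$x$ bookkeeping you describe. Your reading of the bracketed sum with $\beta_k,\gamma_k$ (the paper's $\beta_n,\gamma_n$ being a slip) and your remarks on the harmless constants are consistent with the intended argument, which the paper states at the same level of precision.
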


\begin{remark}\label{rem:condgen}
If we are right with the assumption that the authors in \cite{Baldassi_etal} would take $\gamma_n \to \infty$ when time gets large, the result of the theorem is, however, not what the authors in \cite{Baldassi_etal} seem to intend with their introduction of Replicated Simulated Annealing algorithm. Indeed,
when $\beta_n \to \infty$, and $\gamma_n \to \infty$ the measure $\overline Q_{\beta_n,\gamma_n}$ converges to  $\overline Q_{\infty,\infty}$. However, the latter is nothing but the uniform distribution on
$$
\tilde {\mathcal{N}}_0:=\{\{\sigma^1, \ldots, \sigma^y\}: E(\sigma^a)=0 \, \text{ for all } a=1, \ldots y, \mbox{ and }
\sigma^1 = \ldots = \sigma^y\}.
$$
In particular, $\overline Q_{\infty,\infty}$ does not put higher probability on configuration in dense regions of the state space.
\end{remark}

\begin{remark}
Note that for both, Theorem \ref{theo:theo1} and Theorem \ref{theo:theo2}, the cooling schedules have to be chosen very carefully. An anonymous referee remarked that there are simulation algorithms for Gibbs measures that do not use such a cooling strategy as parallel tempering \cite{orlandini}, swapping \cite{GeyerMCMCmaximumLikelihood},\cite{marinari_parisi}, \cite{GeyerThompsonAMCMC}, or equi-energy sampling \cite{KZW}. We are grateful for this remark.

However, there are some issues with these algorithms.
First of all, all of these algorithms simulate Gibbs measures at non-zero temperatures. That means we will obtain an impression of the energy landscapes, but not necessarily convergence towards their global maxima or minima. However, for the simulations in Section 4 this is still an important remark.

The tempering algorithms usually suffer from the deficit that they require computation of partition functions which is as hard as finding the minima or maxima of the energies involved. Swapping circumvents these problems. However, the speed convergence may be a problem (as it is for simulated annealing). In some situations the swapping algorithm converges rapidly (i.e.\ in polynomial time), see e.g.\ \cite{MadrasZhengCW}, \cite{EKLV}, \cite{loewe_vermet_swap}, in others the convergence takes exponentially long, see \cite{BhatnagarRandallTorpidMixingPotts} or \cite{EbbersLoweREM}. The results in \cite{ebbers_loewe_EE} show that equi-energy sampling typically does not overcome the problem of torpid mixing.
\end{remark}
%

In the next section, we empirically study a slightly modified version of the algorithm of Replicated Simulated Annealing, using both synthetic toy datasets and real data bases.


\pgfplotsset{ every non boxed x axis/.append style={x axis line style=-},
     every non boxed y axis/.append style={y axis line style=-}}

\section{Experiments}

Throughout this section, we present various experiments we conducted to empirically study the effectiveness of Replicated Simulated Annealing. While the previous section had an emphasis on theoretical results on the asymptotic behaviour of the algorithm -- which from our point of view is necessary for its introduction -- the current section analyzes its finite time behaviour and the role of the choice of parameters. 

Notice that hence in this simulation section we will necessarily stay closer to the setting in \cite{Baldassi_etal}. Especially, other than in the preceding theoretical section we will not let $\beta$ and $\gamma$ tend to infinity (for the theoretical part this was necessary in order to get convergence results, while it is impossible in practical applications). The precise setting will be described below. We will put an emphasis on studying the effect of the choice of these hyperparameters on the performance of our algorithm, as well as the robustness of the found solutions.

We conduct our experiments using the MNIST dataset, also
described below, and synthetic data.

\subsection{MNIST dataset}

MNIST is a dataset of images depicting digits between 0 and 9. We randomly choose a learning set of 6,000 examples per digit, i.e. these examples are used to calibrate the model. The aim is  to train a classifier to correctly predict which digits are depicted in previously unseen images. This ability of generalization is measured using a test set containing 1,000 examples per digit, distinct from those appearing in the training set. The proportion of correctly classified images in the training set (resp. test set) is called the training accuracy (resp. test accuracy). MNIST images are 28x28 pixels and grey-leveled. As such, they are typically represented by a 784-sized vector of numbers between 0 and 255.

When training a binary (weights can only be -1 or 1) logistic regression classifier on MNIST using Replicated Simulated Annealing, we typically achieve a 88\% accuracy on the test set, which is on par with the performance obtained with continuous weights and gradient descent. Note that when training our models, we use the cross-entropy loss as our energy, which we refer to as the training loss in the following.
In the case of classification with $K=10$ classes, the output of the model associated to an input $x_i$ is a probability vector $y_i=(y_{i,1},\ldots, y_{i,K})$, and the cross-entropy is then 
$$H(t,y) = - \sum_{i=1}^n  \sum_{k=1}^K t_{i,k} \log(y_{i,k}),$$
where $t_i\in\{0,1\}^K$ is the true class of $x_i$. 
Beside, the cross-entropy loss on the test set is referred to as the test loss. We train the networks for a total of 300,000 total iterations, starting from a random configuration. Our models contain a total of $784\cdot 10 = 7840$ parameters, corresponding to a single matrix the input of which is a raw image of 784 dimensions and the output of which is a 10-sized vector where the largest coordinate indicates the associated decision.

\subsection{Effect of the initial and final values of $\beta$}

As mentioned above in our experiments we will always take $\beta$ from a certain bounded range of values $[\beta_i, \beta_f]$ (these bounds will be used in the remaining of this work).
We first explore the influence of the initial and final values for $\beta$. Throughout our experiments, we change the value of $\beta$ from $\beta_i$ to $\beta_f$ following an exponential interpolation where $\beta = \beta_i \left(\beta_f/\beta_i\right)^{it/it_{\max}}$, $it$ being the current number of iterations and $it_{\max}$ the total number of iterations. This choice of interpolation appeared to give the best and most consistent results among the interpolations we tried, including linear and quadratic with various parameters. Note that for this first series of experiments we only train one model ($\gamma = 0$). First in Table~\ref{active_changes_beta}, we indicate the number of active transitions (when a potential flip of a value has been performed). Little surprisingly, we observe that the higher the values of $\beta$, the less likely we perform flips. We observe a range of two orders of magnitude with our selected parameters.

\begin{table}[h]
  \caption{Number of active transitions, during the learning of MNIST, as a function of $\beta_i$ and $\beta_f$.}
  \label{active_changes_beta}
  \begin{center}
    \begin{tabular}{|c||c|c|c|c|c|c|}
      \hline
      $\beta_i$/$\beta_f$ & 10 & 100 & 1,000 & 10,000 & 100,000 & 1,000,000\\
      \hline
      \hline
      1 & 271,733 & 246,793 & 204,681 & 160,424 & 129,138 & 107,549 \\
      \hline
      10 & 251,891 & 219,362 & 172,511 & 122,851 & 92,480 & 74,444 \\
      \hline
      100 & 220,294 & 182,280 & 121,842 & 73,605 & 48,810 & 36,684 \\
      \hline
      1,000 & 171,610 & 122,934 & 60,901 & 24,248 & 13,690 & 10,216\\
      \hline
      10,000 & 125,211 & 76,515 & 25,593 & 7,542 & 5,061 & 4,415 \\
      \hline
    \end{tabular}
  \end{center}
\end{table}

In Table~\ref{train_set_beta}, we depict the corresponding training loss and training accuracy. Interestingly, we observe that the largest values of $\beta$ are not necessarily giving the best results, suggesting that allowing to perform flips that immediately slightly lower the loss can be beneficial in the long run. We also observe that the results do not seem to
be very sensitive of the choice of the initial and final values for $\beta$, as a large range of these values yield a very similar performance. Together with Table~\ref{active_changes_beta}, we can observe that $\beta_i = 100$ and $\beta_f = 100,000$ is a reasonable choice of parameters. This is also confirmed by the results given in Table~\ref{test_set_beta} where we depict the corresponding test loss and test accuracy.

\begin{table}[h]
  \caption{Final MNIST train set loss and corresponding accuracies, as a function of $\beta_i$ and $\beta_f$. Bold numbers correspond to the best results across the table.}
  \label{train_set_beta}
  \footnotesize
  \begin{center}
    \begin{tabular}{|c||c|c|c|c|c|}
      \hline
      $\beta_i$/$\beta_f$ & 
      100 & 1,000 & 10,000 & 100,000 & 1,000,000\\
      \hline
      \hline
      1 & 
      3.49 (77.56\%) & 1.71 (86.08\%) & 1.47 (87.57\%) & 1.43 (87.53\%) & 1.47 (87.35\%)\\
      \hline
      10 & 
      3.54 (76.84\%) & 1.72 (86.10\%) & 1.44 (87.29\%) & 1.44 (87.44\%) & 1.44 (87.41\%)\\
      \hline
      100 & 
      3.32 (77.66\%) & 1.64 (86.34\%) & 1.35 (87.78\%) & \textbf{1.32} (87.56\%) & 1.37 (\textbf{87.64\%})\\
      \hline
      1,000 & 
      3.18 (78.60\%) & 1.59 (86.00\%) & 1.45 (87.25\%) & 1.52 (87.03\%) & 1.50 (86.94\%)\\
      \hline
      10,000 & 
      3.14 (77.57\%) & 1.69 (85.07\%) & 1.61 (86.16\%) & 1.59 (86.95\%) & 1.59 (86.70\%)\\
      \hline
    \end{tabular}
  \end{center}
\end{table}

\begin{table}[h]
  \caption{Final MNIST test set loss and corresponding accuracies, as a function of $\beta_i$ and $\beta_f$. Bold numbers correspond to the best results across the table.}
  \label{test_set_beta}
  \footnotesize
  \begin{center}
    \begin{tabular}{|c||c|c|c|c|c|}
      \hline
      $\beta_i$/$\beta_f$ & 
      100 & 1,000 & 10,000 & 100,000 & 1,000,000\\
      \hline
      \hline
      1 & 
      3.27 (78.29\%) & 1.69 (86.66\%) & 1.51 (87.50\%) & 1.55 (87.46\%) & 1.56 (87.51\%)\\
      \hline
      10 & 
      3.41 (77.45\%) & 1.72 (85.94\%) & 1.50 (87.62\%) & 1.54 (87.34\%) & 1.52 (87.35\%)\\
      \hline
      100 & 
      3.05 (78.84\%) & 1.60 (86.31\%) & 1.51 (87.48\%) & \textbf{1.36 (87.69\%)} & 1.46 (87.46\%)\\
      \hline
      1,000 & 
      3.11 (78.88\%) & 1.68 (86.31\%) & 1.61 (87.11\%) & 1.63 (87.09\%) & 1.65 (87.14\%)\\
      \hline
      10,000 & 
      3.13 (77.81\%) & 1.69 (85.77\%) & 1.65 (86.69\%) & 1.73 (86.55\%) & 1.68 (86.46\%)\\
      \hline
    \end{tabular}
  \end{center}
\end{table}

\subsection{Influence of $\gamma$}

To gain a better understanding of the influence of $\gamma$, in the next series of experiments we reduce the number of training samples to accelerate computations. Namely we use 10,000 arbitrary training samples. We perform 10 runs for each value of $\gamma$, choosing the best values of $\beta_i$ and $\beta_f$ found in the previous section. We plot the error bars (confidence interval at 95\%) for each value of $\gamma$. In Figure~\ref{train_gamma} we depict the evolution of the training accuracy and training loss. In Figure~\ref{test_gamma} the evolution of the test accuracy and test loss, and in Figure~\ref{iterations_gamma} the evolution of the number of active transitions. We observe that $\gamma$ helps in finding better solutions, that is to say solutions with higher accuracies on both the training and the test set. That is only true for a limited range though, as increasing $\gamma$ too much lead to dramatic decrease in overall performance. This is not surprising as a too large $\gamma$ leads to forbid many transitions that would result in reducing the loss. Also this may be seen as being in agreement with the findings of Proposition \ref{prop_dense}, Theorem \ref{condgen} and Remark \ref{rem:condgen} in the previous section (even though there we chose the parameters in such a way to convergence to an invariant measure was guaranteed).

\begin{figure}
  \begin{center}
    \begin{tikzpicture}[ultra thick]
\pgfplotsset{set layers}
\begin{axis}[
    scale only axis,
    axis y line=left,
    xlabel=$\gamma$,
    ymin=0.86,
    ymax = 0.883,
    xmax = 2,
    ylabel style={align=center},
    ylabel={train accuracy $y = 3$ \ref{pgfplots:plot1a}, $y=5$ \ref{pgfplots:plot3a}}
]
\addplot[
    mark=*,
    blue,
    error bars/.cd, y dir=both, y explicit,
] plot coordinates {
( 0.000000, 0.879107) += (0, 0.000702) -= (0,  0.000702)
( 0.100000, 0.879204) += (0, 0.000986) -= (0,  0.000986)
( 0.200000, 0.879727) += (0, 0.000580) -= (0,  0.000580)
( 0.400000, 0.880582) += (0, 0.001108) -= (0,  0.001108)
( 0.800000, 0.880669) += (0, 0.000895) -= (0,  0.000895)
( 1.600000, 0.864686) += (0, 0.001672) -= (0,  0.001672)
( 3.200000, 0.395767) += (0, 0.028931) -= (0,  0.028931)
( 6.400000, 0.105233) += (0, 0.018176) -= (0,  0.018176)
};
\label{pgfplots:plot1a}
\addplot[
    mark=*,
    cyan,
    error bars/.cd, y dir=both, y explicit,
] plot coordinates {
( 0.000000, 0.878636) += (0, 0.000556) -= (0,  0.000556)
( 0.100000, 0.878665) += (0, 0.000638) -= (0,  0.000638)
( 0.200000, 0.880159) += (0, 0.000840) -= (0,  0.000840)
( 0.400000, 0.881875) += (0, 0.000570) -= (0,  0.000570)
( 0.800000, 0.880901) += (0, 0.000576) -= (0,  0.000576)
( 1.600000, 0.860830) += (0, 0.001713) -= (0,  0.001713)
( 3.200000, 0.214829) += (0, 0.024035) -= (0,  0.024035)
( 6.400000, 0.094670) += (0, 0.021598) -= (0,  0.021598)
};
\label{pgfplots:plot3a}
\end{axis}
\begin{axis}[
    scale only axis,
    axis y line=right,
    axis x line=none,
    ymin=-1,
    ymax=20,
    ylabel style={align=center},
    ylabel={train loss $y=3$ \ref{pgfplots:plot2a}, $y=5$ \ref{pgfplots:plot4a}}
]
\addplot[
    mark=*,
    red,
    error bars/.cd, y dir=both, y explicit,
] plot coordinates {
( 0.000000, 1.305628) += (0, 0.008274) -= (0,  0.008274)
( 0.100000, 1.307197) += (0, 0.007944) -= (0,  0.007944)
( 0.200000, 1.311356) += (0, 0.007972) -= (0,  0.007972)
( 0.400000, 1.318854) += (0, 0.012751) -= (0,  0.012751)
( 0.800000, 1.346346) += (0, 0.005196) -= (0,  0.005196)
( 1.600000, 1.683810) += (0, 0.024885) -= (0,  0.024885)
( 3.200000, 13.322294) += (0, 1.188810) -= (0,  1.188810)
( 6.400000, 40.643523) += (0, 4.083131) -= (0,  4.083131)
};
\label{pgfplots:plot2a}
\addplot[
    mark=*,
    orange,
    error bars/.cd, y dir=both, y explicit,
] plot coordinates {
( 0.000000, 1.314056) += (0, 0.007175) -= (0,  0.007175)
( 0.100000, 1.311676) += (0, 0.003481) -= (0,  0.003481)
( 0.200000, 1.305734) += (0, 0.006707) -= (0,  0.006707)
( 0.400000, 1.302314) += (0, 0.007908) -= (0,  0.007908)
( 0.800000, 1.367624) += (0, 0.009402) -= (0,  0.009402)
( 1.600000, 1.795039) += (0, 0.019474) -= (0,  0.019474)
( 3.200000, 22.468377) += (0, 1.864691) -= (0,  1.864691)
( 6.400000, 42.601920) += (0, 2.782936) -= (0,  2.782936)
};
\label{pgfplots:plot4a}
\end{axis}
\end{tikzpicture}
  \end{center}
  \caption{Evolution of the train accuracy and train loss as a function of $\gamma$, and for various values of $y$.}
  \label{train_gamma}
\end{figure}
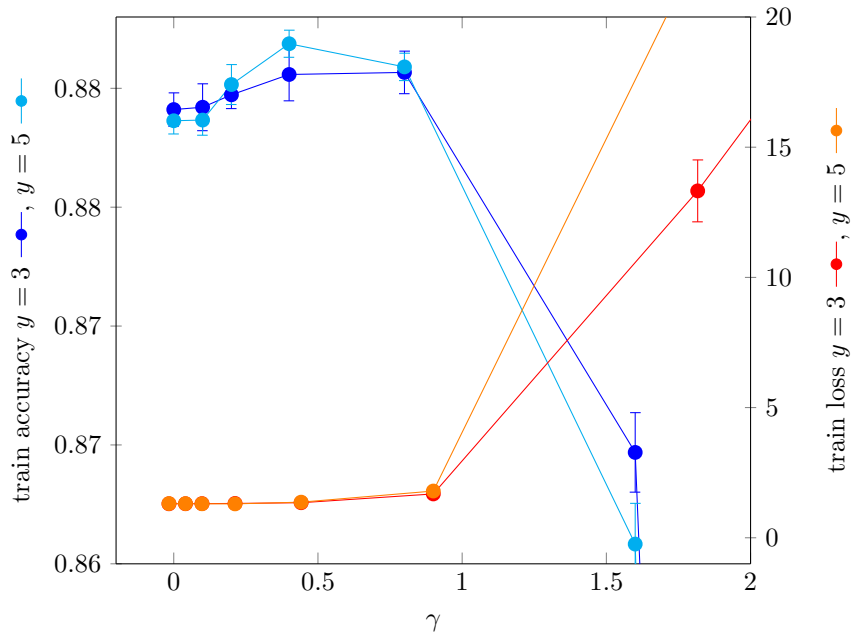

\begin{figure}
  \begin{center}
    \begin{tikzpicture}[ultra thick]
\pgfplotsset{set layers}
\begin{axis}[
    scale only axis,
    axis y line=left,
    xlabel=$\gamma$,
    ymin=0.86,
    ymax=0.883,
    xmax= 2,
    ylabel style={align=center},
    ylabel={test accuracy $y=3$ \ref{pgfplots:plot1b}, $y=5$ \ref{pgfplots:plot3b}}
]
\addplot[
    mark=*,
    blue,
    error bars/.cd, y dir=both, y explicit,
] plot coordinates {
( 0.000000, 0.876153) += (0, 0.000760) -= (0,  0.000760)
( 0.100000, 0.877127) += (0, 0.001441) -= (0,  0.001441)
( 0.200000, 0.877150) += (0, 0.001207) -= (0,  0.001207)
( 0.400000, 0.878077) += (0, 0.001484) -= (0,  0.001484)
( 0.800000, 0.880383) += (0, 0.001071) -= (0,  0.001071)
( 1.600000, 0.866127) += (0, 0.002672) -= (0,  0.002672)
( 3.200000, 0.401423) += (0, 0.030805) -= (0,  0.030805)
( 6.400000, 0.103760) += (0, 0.019258) -= (0,  0.019258)
};
\label{pgfplots:plot1b}
\addplot[
    mark=*,
    cyan,
    error bars/.cd, y dir=both, y explicit,
] plot coordinates {
( 0.000000, 0.876688) += (0, 0.001374) -= (0,  0.001374)
( 0.100000, 0.876752) += (0, 0.001082) -= (0,  0.001082)
( 0.200000, 0.877980) += (0, 0.000913) -= (0,  0.000913)
( 0.400000, 0.880052) += (0, 0.000925) -= (0,  0.000925)
( 0.800000, 0.878604) += (0, 0.000960) -= (0,  0.000960)
( 1.600000, 0.863170) += (0, 0.001711) -= (0,  0.001711)
( 3.200000, 0.218820) += (0, 0.023576) -= (0,  0.023576)
( 6.400000, 0.094756) += (0, 0.023968) -= (0,  0.023968)
};
\label{pgfplots:plot3b}
\end{axis}
\begin{axis}[
    scale only axis,
    axis y line=right,
    axis x line=none,
    ymin=-1,
    ymax=20,
    ylabel style={align=center},
    ylabel={test loss $y=3$ \ref{pgfplots:plot2b}, $y=5$ \ref{pgfplots:plot4b}}
]
\addplot[
    mark=*,
    red,
    error bars/.cd, y dir=both, y explicit,
] plot coordinates {
( 0.000000, 1.440945) += (0, 0.012396) -= (0,  0.012396)
( 0.100000, 1.431905) += (0, 0.014829) -= (0,  0.014829)
( 0.200000, 1.457716) += (0, 0.016440) -= (0,  0.016440)
( 0.400000, 1.457907) += (0, 0.013924) -= (0,  0.013924)
( 0.800000, 1.466174) += (0, 0.017617) -= (0,  0.017617)
( 1.600000, 1.747894) += (0, 0.034602) -= (0,  0.034602)
( 3.200000, 13.047086) += (0, 1.243054) -= (0,  1.243054)
( 6.400000, 40.909024) += (0, 4.099699) -= (0,  4.099699)
};
\label{pgfplots:plot2b}
\addplot[
    mark=*,
    orange,
    error bars/.cd, y dir=both, y explicit,
] plot coordinates {
( 0.000000, 1.450893) += (0, 0.012511) -= (0,  0.012511)
( 0.100000, 1.445800) += (0, 0.010666) -= (0,  0.010666)
( 0.200000, 1.442047) += (0, 0.012832) -= (0,  0.012832)
( 0.400000, 1.439572) += (0, 0.014535) -= (0,  0.014535)
( 0.800000, 1.508703) += (0, 0.012593) -= (0,  0.012593)
( 1.600000, 1.833471) += (0, 0.026771) -= (0,  0.026771)
( 3.200000, 22.264160) += (0, 1.878164) -= (0,  1.878164)
( 6.400000, 43.080360) += (0, 2.886584) -= (0,  2.886584)
};
\label{pgfplots:plot4b}
\end{axis}
\end{tikzpicture}
  \end{center}
  \caption{Evolution of the test accuracy and test loss as a function of $\gamma$, and for various values of $y$.}
  \label{test_gamma}
\end{figure}
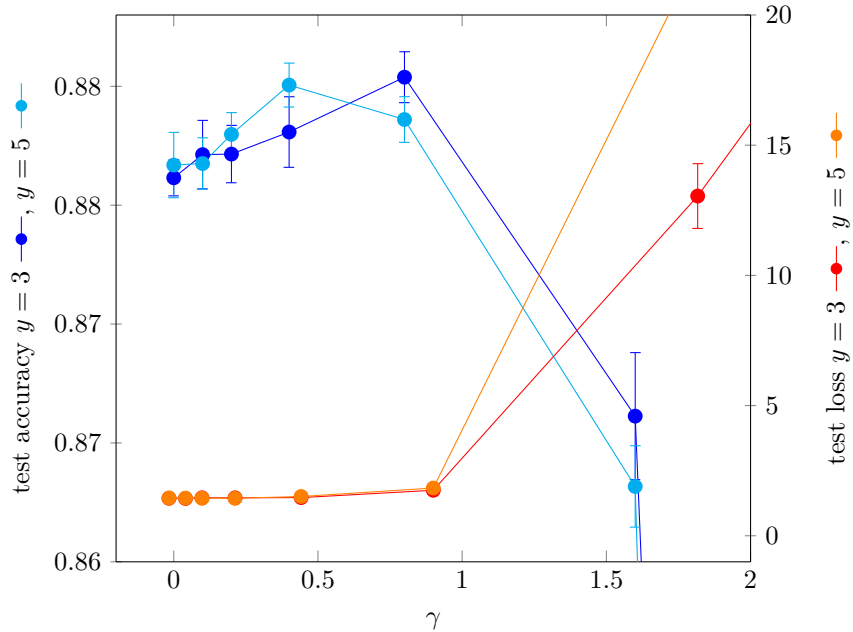

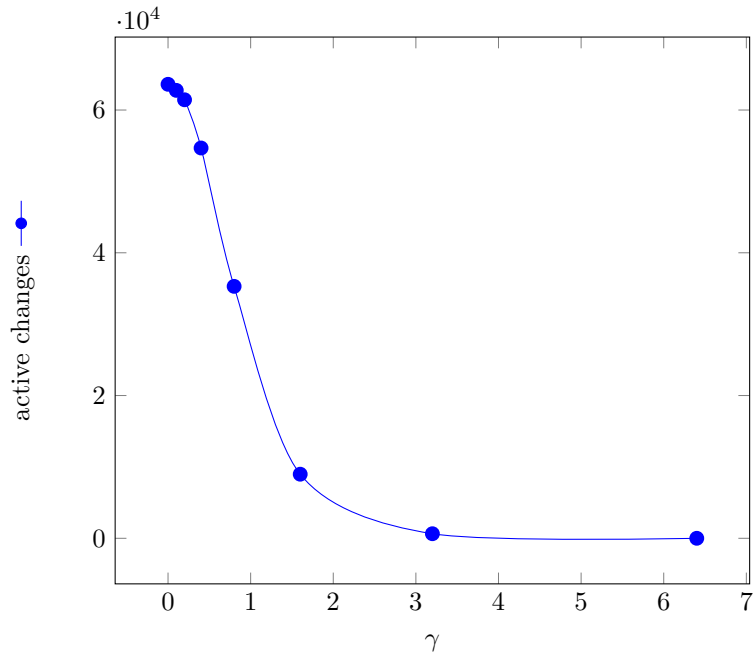
\begin{figure}[h]
  \begin{center}
    \begin{tikzpicture}[ultra thick]
\pgfplotsset{set layers}
\begin{axis}[
    scale only axis,
    xlabel=$\gamma$,
    ylabel style={align=center},
    ylabel={active changes \ref{pgfplots:plot1c}}
]
\addplot[
    smooth,
    mark=*,
    blue,
    error bars/.cd, y dir=both, y explicit,
] plot coordinates {
( 0.000000, 63604.233333) += (0, 235.562395) -= (0,  235.562395)
( 0.100000, 62754.833333) += (0, 232.267374) -= (0,  232.267374)
( 0.200000, 61436.300000) += (0, 196.705869) -= (0,  196.705869)
( 0.400000, 54679.500000) += (0, 235.829507) -= (0,  235.829507)
( 0.800000, 35297.100000) += (0, 186.288589) -= (0,  186.288589)
( 1.600000, 8979.400000) += (0, 115.131056) -= (0,  115.131056)
( 3.200000, 639.933333) += (0, 20.572045) -= (0,  20.572045)
( 6.400000, 1.000000) += (0, 0.449630) -= (0,  0.449630)
};
\label{pgfplots:plot1c}
\end{axis}
\end{tikzpicture}
  \end{center}
  \caption{Evolution of the number of active iterations as a function of $\gamma$, and for various values of $y$.}
  \label{iterations_gamma}
\end{figure}

\subsection{Robustness of trained models}

To study the robustness of trained models, we consider randomly perturbating a proportion $p$ of the weights in the trained models, and evaluating the impact on the test accuracy. We average each point over 1,000 runs of random perturbations, but since it takes a very long time to train the models with MNIST, we always use the same trained models (one for each value of $\gamma$). In Figure~\ref{robustness3}, we depict the results for $y=3$, in Figure~\ref{robustness5} for $y=5$, and in Figure~\ref{robustness7} for $y=7$. In order to add statistically more significant results, we also plot in Figure~\ref{robustnesssynt} results obtained with synthetic data and $y=10$. Synthetic data is created by generating 30 vectors uniformly drawn with repetition from all binary vectors of size 100. In this experiment, we average the results over 1,000 tests for each point. For this additional experiment, we found that the best values are $\beta_i = 0.1$ and $\beta_f = 1,000$.

\begin{figure}
  \begin{center}
    \input{robustness3}
  \end{center}
  \caption{Robustness of trained models on MNIST as a funciton of the proportion of flipped parameters $p$ ($y=3$). Shaded regions around the curves correspond to the confidence interval at 95\%.}
  \label{robustness3}
\end{figure}

\begin{figure}
  \begin{center}
    \input{robustness5}
  \end{center}
  \caption{Robustness of trained models on MNIST as a function of the proportion of flipped parameters $p$ ($y=5$). Shaded regions around the curves correspond to the confidence interval at 95\%.}
  \label{robustness5}
\end{figure}

\begin{figure}
  \begin{center}
    \input{robustness7}
  \end{center}
  \caption{Robustness of trained models on MNIST as a function of the proportion of flipped parameters $p$ ($y=7$). Shaded regions around the curves correspond to the confidence interval at 95\%.}
  \label{robustness7}
\end{figure}

\begin{figure}
  \begin{center}
    \begin{tikzpicture}[ultra thick]
\begin{axis}[
    scale only axis,
    xlabel=$p$,
width = 10.5cm,
height = 6cm,
    ylabel={train accuracy}
]
    \addplot +[mark=none, orange] coordinates {
    (0,1.)
(0.001,0.99816667)
(0.002,0.9961)
(0.005,0.98883333)
(0.01,0.977)
(0.02,0.95593333)
(0.05,0.89969999)
(0.1,0.83906666)
(0.2,0.74346667)
(0.5,0.53853334)
};
    \addlegendentry{$\gamma = 0$}
    \addplot [mark=none, orange, draw=none, name path=upper, forget plot] coordinates {
    (0,1.0)
(0.001,0.9986507501769166)
(0.002,0.9968053086620764)
(0.005,0.990022431243352)
(0.01,0.9786963155524057)
(0.02,0.9582558762053744)
(0.05,0.9030993314507605)
(0.1,0.8432249683354238)
(0.2,0.7484086171326916)
(0.5,0.5441745453146207)
};

    \addplot [mark=none, orange, draw=none, name path=lower, forget plot] coordinates {
    (0,1.0)
(0.001,0.9976825898230834)
(0.002,0.9953946913379236)
(0.005,0.9876442287566479)
(0.01,0.9753036844475943)
(0.02,0.9536107837946256)
(0.05,0.8963006485492395)
(0.1,0.8349083516645761)
(0.2,0.7385247228673084)
(0.5,0.5328921346853792)
};
\addplot [fill=orange!10, forget plot] fill between[of=upper and lower];

    \addplot +[mark=none, blue] coordinates {
    (0,1.)
(0.001,0.99926667)
(0.002,0.99783333)
(0.005,0.99283333)
(0.01,0.98266666)
(0.02,0.96109999)
(0.05,0.90899999)
(0.1,0.84683333)
(0.2,0.7484)
(0.5,0.54036667)
};
    \addlegendentry{$\gamma = 0.4$}
    \addplot [mark=none, blue, draw=none, name path=upper, forget plot] coordinates {
    (0,1.0)
(0.001,0.9995729974184959)
(0.002,0.9983594936386376)
(0.005,0.9937878660206929)
(0.01,0.9841435212062793)
(0.02,0.9632880266228747)
(0.05,0.912254593620639)
(0.1,0.8509087892086381)
(0.2,0.7533104088271887)
(0.5,0.5460062333269226)
};

    \addplot [mark=none, blue, draw=none, name path=lower, forget plot] coordinates {
    (0,1.0)
(0.001,0.9989603425815041)
(0.002,0.9973071663613623)
(0.005,0.991878793979307)
(0.01,0.9811897987937207)
(0.02,0.9589119533771253)
(0.05,0.905745386379361)
(0.1,0.842757870791362)
(0.2,0.7434895911728112)
(0.5,0.5347271066730775)
};
\addplot [fill=blue!10, forget plot] fill between[of=upper and lower];

    \addplot +[mark=none, green] coordinates {
    (0,1.)
(0.001,0.99906667)
(0.002,0.9972)
(0.005,0.99196666)
(0.01,0.98246666)
(0.02,0.96459999)
(0.05,0.91766666)
(0.1,0.85026666)
(0.2,0.75736667)
(0.5,0.53736667)
};
    \addlegendentry{$\gamma = 0.8$}
    \addplot [mark=none, green, draw=none, name path=upper, forget plot] coordinates {
    (0,1.0)
(0.001,0.9994122195184707)
(0.002,0.9977979509973234)
(0.005,0.9929768243927392)
(0.01,0.9839518659555363)
(0.02,0.9666910705519327)
(0.05,0.9207771283571002)
(0.1,0.8543043435194371)
(0.2,0.7622175861505164)
(0.5,0.543008880254702)
};

    \addplot [mark=none, green, draw=none, name path=lower, forget plot] coordinates {
    (0,1.0)
(0.001,0.9987211204815294)
(0.002,0.9966020490026766)
(0.005,0.9909564956072608)
(0.01,0.9809814540444638)
(0.02,0.9625089094480672)
(0.05,0.9145561916428997)
(0.1,0.8462289764805628)
(0.2,0.7525157538494837)
(0.5,0.5317244597452981)
};
\addplot [fill=green!10, forget plot] fill between[of=upper and lower];

    \addplot +[mark=none, black] coordinates {
    (0,1.)
(0.001,0.99592253)
(0.002,0.99398573)
(0.005,0.98178729)
(0.01,0.9667686)
(0.02,0.9346585)
(0.05,0.87689432)
(0.1,0.80924227)
(0.2,0.71811077)
(0.5,0.54176012)
};
    \addlegendentry{$\gamma = 1.6$}
    \addplot [mark=none, black, draw=none, name path=upper, forget plot] coordinates {
    (0,1.0)
(0.001,0.9966436434729531)
(0.002,0.994860667396125)
(0.005,0.9833004729283359)
(0.01,0.9687968945848339)
(0.02,0.9374550106266677)
(0.05,0.88061230915111)
(0.1,0.8136883348877378)
(0.2,0.7232020901353731)
(0.5,0.5473983839304362)
};

    \addplot [mark=none, black, draw=none, name path=lower, forget plot] coordinates {
    (0,1.0)
(0.001,0.9952014165270469)
(0.002,0.993110792603875)
(0.005,0.9802741070716641)
(0.01,0.9647403054151661)
(0.02,0.9318619893733322)
(0.05,0.87317633084889)
(0.1,0.8047962051122622)
(0.2,0.713019449864627)
(0.5,0.5361218560695638)
};
\addplot [fill=black!10, forget plot] fill between[of=upper and lower];

\end{axis}
\end{tikzpicture}
  \end{center}
  \caption{Robustness of trained models with synthetic data as a function of the proportion of flipped parameters $p$ ($y=10$). Shaded regions around the curves correspond to the confidence interval at 95\%.}
  \label{robustnesssynt}
\end{figure}
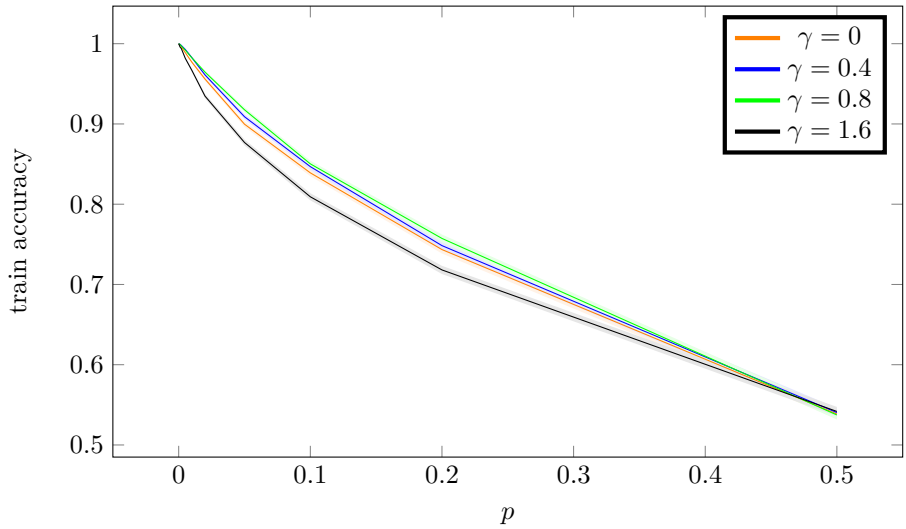

Interestingly, we observe that the most robust models are the ones for a balanced value of $\gamma$, typically 0.8 or 1.6. This is even true for the case of synthetic data, despite the fact all the models start with a perfect accuracy of 100\% when uncorrupted. This is inline with the claims of the authors of~\cite{Baldassi_etal}.

\section{Conclusion}

In this work, we have proposed to mathematically and empirically study the algorithm of Replicated Simulated Annealing, that is used to find good configurations of discrete weights neural networks.
Here the term ``good configurations'' refers to configurations in so called dense regions. We have proposed a definition of such dense regions, which are supposed to yield good generalization properties.
We have given conditions that ensure convergence of the algorithm and discussed its ability to find good configurations in dense robust regions of the search space. We have seen that to do so the parameter $\beta$ always need to be taken to infinity when time becomes large, while the parameter $\gamma$ needs to stay finite.

We also performed experiments using both real datasets and synthetic data to illustrate the role of the choice of the parameters in finite time. Overall, our findings show that Replicated Simulated Annealing is able to find interesting, i.e.\ ''good'', configurations, but that the gain compared to a simple Simulated Annealing is rather small, sometimes even nonexistent in the asymptotic regime, depending on whether one lets $\gamma \to \infty$ or not.


\begin{thebibliography}{10}

\bibitem{azencott}
R.~Azencott.
\newblock Simulated annealing.
\newblock {\em Ast\'{e}risque}, (161-162):Exp. No. 697, 5, 223--237 (1989),
  1988.
\newblock S\'{e}minaire Bourbaki, Vol. 1987/88.

\bibitem{Baldassi_etal}
C.~Baldassi, C.~Borgs, J.~T. Chayes, A.~Ingrosso, C.~Lucibello, L.~Saglietti,
  and R.~Zecchina.
\newblock Unreasonable effectiveness of learning neural networks: From
  accessible states and robust ensembles to basic algorithmic schemes.
\newblock {\em Proceedings of the National Academy of Sciences},
  113(48):E7655--E7662, 2016.

\bibitem{BhatnagarRandallTorpidMixingPotts}
N.~Bhatnagar and D.~Randall.
\newblock Torpid mixing of simulated tempering on the {P}otts model.
\newblock In {\em Proceedings of the {F}ifteenth {A}nnual {ACM}-{SIAM}
  {S}ymposium on {D}iscrete {A}lgorithms}, pages 478--487 (electronic), New
  York, 2004. ACM.

\bibitem{Biroli_2020}
G.~Biroli, C.~Cammarota, and F.~Ricci-Tersenghi.
\newblock How to iron out rough landscapes and get optimal performances:
  averaged gradient descent and its application to tensor {PCA}.
\newblock {\em Journal of Physics A: Mathematical and Theoretical},
  53(17):174003, apr 2020.

\bibitem{catoni}
O.~Catoni.
\newblock Rough large deviation estimates for simulated annealing: application
  to exponential schedules.
\newblock {\em Ann. Probab.}, 20(3):1109--1146, 1992.

\bibitem{choi2018pact}
J.~Choi, Z.~Wang, S.~Venkataramani, P.~I.-J. Chuang, V.~Srinivasan, and
  K.~Gopalakrishnan.
\newblock Pact: Parameterized clipping activation for quantized neural
  networks.
\newblock {\em arXiv preprint arXiv:1805.06085}, 2018.

\bibitem{courbariaux2015binaryconnect}
M.~Courbariaux, Y.~Bengio, and J.-P. David.
\newblock Binaryconnect: Training deep neural networks with binary weights
  during propagations.
\newblock In {\em Advances in neural information processing systems}, pages
  3123--3131, 2015.

\bibitem{DeuschelMazza}
J.-D. Deuschel and C.~Mazza.
\newblock {$L^2$} convergence of time nonhomogeneous {M}arkov processes. {I}.
  {S}pectral estimates.
\newblock {\em Ann. Appl. Probab.}, 4(4):1012--1056, 1994.

\bibitem{DiaconisStroock}
P.~Diaconis and D.~Stroock.
\newblock Geometric bounds for eigenvalues of {M}arkov chains.
\newblock {\em Ann. Appl. Probab.}, 1(1):36--61, 1991.

\bibitem{EKLV}
M.~Ebbers, H.~Kn\"{o}pfel, M.~L\"{o}we, and F.~Vermet.
\newblock Mixing times for the swapping algorithm on the
  {B}lume-{E}mery-{G}riffiths model.
\newblock {\em Random Structures Algorithms}, 45(1):38--77, 2014.

\bibitem{EbbersLoweREM}
M.~Ebbers and M.~L{\"o}we.
\newblock {Torpid mixing of the swapping chain on some simple spin glass
  models.}
\newblock {\em Markov Process. Relat. Fields}, 15(1):59--80, 2009.

\bibitem{ebbers_loewe_EE}
M.~Ebbers and M.~L\"owe.
\newblock Equi-energy sampling does not converge rapidly on the mean-field
  potts model with three colors close to the critical temperature.
\newblock {\em Journal of Physics A: Mathematical and Theoretical},
  53(11):115001, feb 2020.

\bibitem{esser2019learned}
S.~K. Esser, J.~L. McKinstry, D.~Bablani, R.~Appuswamy, and D.~S. Modha.
\newblock Learned step size quantization.
\newblock {\em arXiv preprint arXiv:1902.08153}, 2019.

\bibitem{FG_SimAn}
A.~Frigerio and G.~Grillo.
\newblock Simulated annealing with time-dependent energy function.
\newblock {\em Math. Z.}, 213(1):97--116, 1993.

\bibitem{Geman_Geman}
S.~Geman and D.~Geman.
\newblock Stochastic relaxation, gibbs distributions, and the bayesian
  restoration of images.
\newblock {\em IEEE Trans. Pattern Anal. Mach. Intell.}, 6(6):721--741, Nov.
  1984.

\bibitem{GeyerMCMCmaximumLikelihood}
C.~J. Geyer.
\newblock Markov chain monte carlo maximum likelihood.
\newblock In {\em Computing Science and Statistics: Proceedings of 23rd
  Symposium on the Interface Interface Foundation}, pages 156--163. Fairfax
  Station, 1991.

\bibitem{GeyerThompsonAMCMC}
C.~J. Geyer and E.~A. Thompson.
\newblock {Annealing Markov chain Monte Carlo with applications to ancestral
  inference.}
\newblock {\em J. Am. Stat. Assoc.}, 90(431):909--920, 1995.

\bibitem{GoetzeSA}
F.~Goetze.
\newblock Rate of convergence of simulated annealing processes.
\newblock {\em preprint (unpublished)}, 1992.

\bibitem{haeggstroem}
O.~H\"{a}ggstr\"{o}m.
\newblock {\em Finite {M}arkov chains and algorithmic applications}, volume~52
  of {\em London Mathematical Society Student Texts}.
\newblock Cambridge University Press, Cambridge, 2002.

\bibitem{hajek}
B.~Hajek.
\newblock Optimization by simulated annealing: a necessary and sufficient
  condition for convergence.
\newblock In {\em Adaptive statistical procedures and related topics ({U}pton,
  {N}.{Y}., 1985)}, volume~8 of {\em IMS Lecture Notes Monogr. Ser.}, pages
  417--427. Inst. Math. Statist., Hayward, CA, 1986.

\bibitem{hastings}
W.~K. Hastings.
\newblock Monte {C}arlo sampling methods using {M}arkov chains and their
  applications.
\newblock {\em Biometrika}, 57(1):97--109, 1970.

\bibitem{HolleyStroock}
R.~Holley and D.~Stroock.
\newblock Simulated annealing via {S}obolev inequalities.
\newblock {\em Comm. Math. Phys.}, 115(4):553--569, 1988.

\bibitem{Horn_matrix}
R.~A. Horn and C.~R. Johnson.
\newblock {\em Topics in matrix analysis}.
\newblock Cambridge University Press, Cambridge, 1991.

\bibitem{jain2019trained}
S.~R. Jain, A.~Gural, M.~Wu, and C.~Dick.
\newblock Trained uniform quantization for accurate and efficient neural
  network inference on fixed-point hardware.
\newblock {\em arXiv preprint arXiv:1903.08066}, 6, 2019.

\bibitem{KirkGelaVecc83}
S.~Kirkpatrick, C.~D. Gelatt, and M.~P. Vecchi.
\newblock Optimization by simulated annealing.
\newblock {\em Science}, 220(4598):671--680, 1983.

\bibitem{KZW}
S.~C. Kou, Q.~Zhou, and W.~H. Wong.
\newblock Equi-energy sampler with applications in statistical inference and
  statistical mechanics.
\newblock {\em Ann. Statist.}, 34(4):1581--1652, 2006.
\newblock With discussions and a rejoinder by the authors.

\bibitem{ML_GenAl}
M.~L\"{o}we.
\newblock On the convergence of genetic algorithms.
\newblock {\em Exposition. Math.}, 14(4):289--312, 1996.

\bibitem{ML_SimAN}
M.~L\"{o}we.
\newblock Simulated annealing with time-dependent energy function via {S}obolev
  inequalities.
\newblock {\em Stochastic Process. Appl.}, 63(2):221--233, 1996.

\bibitem{loewe_vermet_swap}
M.~L{\"o}we and F.~Vermet.
\newblock { The swapping algorithm for the Hopfield model with two patterns.}
\newblock {\em Stochastic Process. Appl.}, 119(10):3471--3493, 2009.

\bibitem{MCP}
W.~Mac~Culloch and W.~Pitts.
\newblock A logical calculus of the ideas immanent in nervous activity.
\newblock {\em Bulletin of Mathematical Biophysics}, 5:115--133, 1943.

\bibitem{MadrasZhengCW}
N.~Madras and Z.~Zheng.
\newblock {On the swapping algorithm.}
\newblock {\em Random Struct. Algorithms}, 22(1):66--97, 2003.

\bibitem{marinari_parisi}
E.~Marinari and G.~Parisi.
\newblock {Simulated tempering: A new Monte Carlo scheme.}
\newblock {\em Europhys Lett.}, 19(6):451--458, 1992.

\bibitem{orlandini}
E.~Orlandini.
\newblock Monte carlo study of polymer systems by multiple markov chain method.
\newblock In {\em Numerical methods for polymeric systems, Ed., S. G.
  Whittington, IMA Volume in Mathematics and Its Applications 102}, pages
  33--57, New York, 1998. Springer-Verlag.

\bibitem{Ros}
F.~Rosenblatt.
\newblock {\em Principles of neurodynamics: Perceptron and theory of brains
  mechanisms}.
\newblock Spartan Books, Washington D.C., 1962.

\end{thebibliography}

\section*{Acknowledgements}

The research of the second author was funded by the Deutsche Forschungsgemeinschaft (DFG, German Research Foundation) under Germany's Excellence Strategy EXC 2044-390685587, Mathematics M{\"u}nster: Dynamics - Geometry - Structure.

\end{document}